\documentclass[lettersize,journal]{IEEEtran}

\usepackage{amsmath,amssymb,amsfonts,amsthm,mathtools,wasysym}
\usepackage{algorithmic}
\usepackage{algorithm}
\usepackage{array}
\usepackage[caption=false,font=normalsize,labelfont=sf,textfont=sf]{subfig}
\usepackage{textcomp}
\usepackage{stfloats}
\usepackage{url}
\usepackage{verbatim}
\usepackage{graphicx}
\usepackage{cite}
\usepackage{csvsimple}
\usepackage[table]{xcolor}
\usepackage{listings}
\usepackage{soul}

\usepackage{comment}
\usepackage[hidelinks]{hyperref}
\usepackage{cleveref}
\Crefname{figure}{Fig.}{Figs.}

\usepackage{pgfplots}
\usepackage{tikz}
\usetikzlibrary{fit}
\usetikzlibrary{calc,positioning,shapes}
\usetikzlibrary{arrows.meta,positioning}
\usetikzlibrary{patterns,decorations.pathmorphing,decorations.markings}
\usetikzlibrary{svg.path}
\usetikzlibrary{backgrounds}

\pgfplotsset{compat=newest}

\usepackage[font=small,skip=2pt]{caption}

\definecolor{color0}{rgb}{0.12156862745098,0.466666666666667,0.705882352941177}
\definecolor{color1}{rgb}{1,0.498039215686275,0.0549019607843137}
\definecolor{color2}{rgb}{0.172549019607843,0.627450980392157,0.172549019607843}
\definecolor{color3}{rgb}{0.83921568627451,0.152941176470588,0.156862745098039}
\definecolor{color4}{rgb}{0.580392156862745,0.403921568627451,0.741176470588235}
\definecolor{color5}{rgb}{0,0,0}

\definecolor{ErrorPred}{rgb}{0.0196078431372549, 0.18823529411764706, 0.3803921568627451}
\definecolor{ErrorRef}{rgb}{0.2627450980392157, 0.5764705882352941, 0.7647058823529411}
\definecolor{ErrorPred2}{rgb}{0.403921568627451, 0.0002, 0.12156862745098039}
\definecolor{ErrorRef2}{rgb}{0.8392156862745098, 0.3764705882352941, 0.30196078431372547}

\newcommand{\lineWidth}{1.2pt}

\hyphenation{op-tical net-works semi-conduc-tor IEEE-Xplore}

\def\BibTeX{{\rm B\kern-.05em{\sc i\kern-.025em b}\kern-.08em
    T\kern-.1667em\lower.7ex\hbox{E}\kern-.125emX}}
    

\newcommand{\N}{\ensuremath\mathbb{N}}
\newcommand{\Z}{\ensuremath\mathbb{Z}}

\newcommand{\R}{\ensuremath\mathbb{R}}
\newcommand{\C}{\ensuremath\mathbb{C}}

\newcommand{\xone}{\ensuremath{x_1}}
\newcommand{\xtwo}{\ensuremath{x_2}}
\newcommand{\xonezero}{\ensuremath{x_{1,0}}}
\newcommand{\xtwozero}{\ensuremath{x_{2,0}}}
\newcommand{\wone}{\ensuremath{w_1}}
\newcommand{\wtwo}{\ensuremath{w_2}}

\newcommand{\Einit}{\ensuremath{\varepsilon_\mathrm{init}}}
\newcommand{\Eint}{\ensuremath{\varepsilon_\mathrm{int}}}
\newcommand{\Eeq}{\ensuremath{\varepsilon_\mathrm{eq}}}
\newcommand{\Eeqexp}{\ensuremath{\varepsilon_\mathrm{eq,\, exp}}}
\newcommand{\Ebc}{\ensuremath{\varepsilon_\mathrm{bc}}}
\newcommand{\Eref}{\ensuremath{\varepsilon_\mathrm{ref}}}
\newcommand{\newdelta}{\zeta}

\newcommand{\mexp}[0]{\mathrm{e}}

\newcommand{\dt}[0]{\partial_t}
\newcommand{\ds}{\,\mathrm{d}s}

\newcommand{\timeInt}{\mathbb{T}}

\newcommand{\tildeu}[0]{\mathbf{\tilde{u}}}
\newcommand{\hatu}[0]{\mathbf{\hat{u}}}
\newcommand{\uu}[0]{\mathbf{u}}
\newcommand{\tildee}[0]{\mathbf{\tilde{e}}}

\theoremstyle{plain}
\newtheorem{theorem}{Theorem}[section]
\newtheorem{proposition}[theorem]{Proposition}

\newtheorem{corollary}[theorem]{Corollary}
\newtheorem{remark}[theorem]{Remark}
\newtheorem{definition}[theorem]{Definition}
\newtheorem{assumption}[theorem]{Assumption}
\Crefname{assumption}{Assumption}{Assumptions}

\newtheorem{target}[theorem]{Problem}

\begin{document}

\title{Certified machine learning: Rigorous a posteriori error bounds for PDE defined PINNs
\thanks{B.~Hillebrecht acknowledges funding from the International Max Planck Research School for Intelligent Systems (IMPRS-IS). B.~Unger acknowledges funding from the DFG under Germany's Excellence Strategy -- EXC 2075 -- 390740016 and The Ministry of Science, Research and the Arts Baden-Württemberg, 7542.2-9-47.10/36/2. Both authors are thankful for support by the Stuttgart Center for Simulation Science (SimTech).\\
\IEEEauthorrefmark{1},\IEEEauthorrefmark{2}: \textit{Stuttgart Center for Simulation Science}, \textit{University of Stuttgart}, Stuttgart, Germany, \{birgit.hillebrecht,benjamin.unger\}@simtech.uni-stuttgart.de ;\\
\IEEEauthorrefmark{1} 
ORCID: 0000-0001-5361-0505 
\IEEEauthorrefmark{2} 
ORCID: 0000-0003-4272-1079
}
}

\author{\IEEEauthorblockN{Birgit Hillebrecht\IEEEauthorrefmark{1}}
and 
\IEEEauthorblockN{Benjamin Unger\IEEEauthorrefmark{2}}
}


\maketitle

\begin{abstract}
    Prediction error quantification in machine learning has been left out of most methodological investigations of neural networks, for both purely data-driven and physics-informed approaches. Beyond statistical investigations and generic results on the approximation capabilities of neural networks, we present a rigorous upper bound on the prediction error of physics-informed neural networks. This bound can be calculated without the knowledge of the true solution and only with a priori available information about the characteristics of the underlying dynamical system governed by a partial differential equation. We apply this a posteriori error bound exemplarily to four problems: the transport equation, the heat equation, the Navier-Stokes equation and the Klein-Gordon equation.   
\end{abstract}

\begin{IEEEkeywords}
    Physics-informed neural network, machine learning, certification, a posteriori error estimator, Navier-Stokes
\end{IEEEkeywords}

\section{Introduction}

Physics-informed machine learning is applied to numerous highly complex problems, such as turbulence and climate modeling \cite{WuXP18,CruTSB19, XiaWWP17}, model predictive control \cite{ArnK21, NicKFU22}, and Hamiltonian system dynamics \cite{GreDY19}. The systematic study of physics-informed machine learning as a method, however, remains open \cite{MenSCGL22}. Part of this is the question about the quality of the forecast: how close is the prediction of the physics-informed neural network (PINN) \cite{RaiPK19} to the actual solution?

Beyond statistical evaluations, such as in \cite{JinLTK20}, first steps to answer this question were taken in \cite{HilU22} by establishing rigorous error bounds for PINNs approximating the solution of ordinary differential equations. We extend these results by deriving guaranteed upper bounds on the prediction error for problems modeled by linear partial differential equations (PDEs). Our methodology is thereby applicable to, e.g., the previously mentioned use cases of PINNs in computational science and engineering. Similar to \cite{HilU22}, the error bound can be computed a posteriori without knowing the actual solution. 

Since there are extensive studies on the convergence and approximation properties of (physics-informed) neural networks \cite{Bar94, HorSW90, DuaYLLQY21, ShiZK20}, it is clear from a theoretical point of view that a suitable neural network (NN) can be found with the desired accuracy. These a priori results and the associated error estimates \cite{ShiZK20, HonSX21, DeRM22} are fundamentally different compared to the computable a posteriori error bounds we present here. Moreover, we like to stress that our analysis does not rely on statistical approaches or the Bayesian perspective but provides a rigorous error bound, which serves as a certificate for the~NN.

Close in topic is \cite{ShiZK20} wherein a so called 'a posterior' error estimate is obtained based on norm inequalities. But, as stated in their conclusion, the estimates are not directly computable due to unknown constants. In distinction, the error estimator presented here is computable which we demonstrate with several academic examples. Other earlier works \cite{MinR21, BerCP22} also differ from the following work in fundamental aspects, such as that the error bounds derived in them are either not guaranteed to hold, are restricted to a certain problem type, or require discretization steps.

At this point, we would like to emphasize that the results presented in \Cref{sec::error_est} are independent of PINNs as the chosen methodology but can be applied to other surrogate modeling techniques. Nevertheless, since PINNs use the norm of the residual as a regularization term during the training process, they appear as a natural candidate to apply our residual-based error estimator.

\paragraph*{Notation}
We use bold notation for vectors and vector-valued functions and italic letters for operators, especially we use $\mathcal{I}$ to denote the identity operator. The notations $\dot{\mathbf{u}} = \partial_t \mathbf{u} = \frac{\partial \mathbf{u}}{\partial t}$ are used interchangeably to denote partial derivatives w.r.t. time. Similarily, we shorten derivatives w.r.t. spatial variables and use conventional notation for the divergence $\nabla \cdot \phi = \mathrm{div}(\phi)$, the gradient $\nabla \mathbf{u} = \mathrm{grad}(\mathbf{u})$, and the Laplace operator $\Delta \mathbf{u} = \mathrm{div}(\mathrm{grad}(\mathbf{u}))$ w.r.t. to spatial variables only.
For normed spaces $(X, \| \cdot \|_X)$, $(Y, \|\cdot \|_Y)$, and operator $\mathcal{A}\colon D(\mathcal{A}) \subseteq X \rightarrow Y$ we define the induced operator norm $\| \mathcal{A} \|_{X, Y} \coloneqq \mathrm{sup}_{x\in D(\mathcal{A}), \|x\|_X \neq 0} \frac{\|\mathcal{A}x\|_Y}{\|x\|_X}$. In case, $X \subseteq Y$ and $\| \cdot \|_Y = \| \cdot \|_X$, we shorten the notation to $\| \mathcal{A} \|_{X}$, and drop the subscript whenever the norm is clear from the context. For the spatial domain $\Omega \subseteq \R^d$, we denote the boundary by $\partial \Omega$ and define the size of $\Omega$ as $\|\Omega \| = \int_\Omega 1 \mathrm{d}\mathbf{x}$. The considerd temporal domain is denoted by $\timeInt = [0, t_f)$ with $t_f \in \R_+$.
We use the conventional notation for spaces of p-integrable functions $L^p(\Omega)$, and Sobolev spaces $W^{k,p}(\Omega)$ and $H^k(\Omega) = W^{k, 2}(\Omega)$.

\section{Problem description}

In the following, we aim at solving the boundary and inital value problem (BIVP): find $\mathbf{u}\colon \timeInt \times \Omega \rightarrow \R^n$ such that
\begin{equation}
	\label{eq::pde}
	\left\{\quad
    \begin{aligned}
        \dt \mathbf{u} &= \mathcal{A}\mathbf{u} \qquad &\mathrm{in } \quad  \timeInt \times \Omega , \\
        \mathbf{u} &= \mathbf{u_0}  \qquad& \mathrm{in } \quad  \{t=0\} \times \Omega , \\
        \mathcal{B} \mathbf{u} &= \mathbf{u_b} \qquad &\mathrm{in } \quad \timeInt \times \partial \Omega .
    \end{aligned}\right.
\end{equation}
The initial value $\mathbf{u_0}$ lies in the domain of definition of the linear differential operator $\mathcal{A}\colon D(\mathcal{A}) \subseteq X \rightarrow X$, meaning $\mathbf{u_0} \in D(\mathcal{A})$. 
The bounded linear operator $\mathcal{B}\colon D(\mathcal{A}) \rightarrow U$  represents the type of the boundary condition and maps to the Banach space $U$ which contains the boundary condition $\mathbf{u_b}$.
As an example, for the Dirichlet boundary condition $\mathcal{B} = \mathcal{T} \mathcal{I}$ is suitable, wherein $\mathcal{T}$ denotes the trace operator.

With this setup given, the objective can be intuitively formulated as follows.
\begin{target} \label{problemdescription::informal}
    Given an approximate solution $\hatu$ for the BIVP \eqref{eq::pde}, which is determined, for example, by approximating the system with a NN. Find a computable certificate $\varepsilon \colon \timeInt \rightarrow \R_+$ such that
    \begin{equation*}
        \| \hat{\mathbf{u}}(t, \cdot)- \mathbf{u}(t, \cdot) \|_X \le \varepsilon(t)
    \end{equation*}
    without knowing the true solution $\uu$. 
\end{target}

In the following, we consider the BIVP \eqref{eq::pde} in the framework of semigroups  and investigate mild solutions 
\begin{equation*} 
    \mathbf{u}(t, \cdot) = \mathcal{S}(t) \mathbf{u_0}(\cdot)
\end{equation*}
defined by the semigroup of operators $\{\mathcal{S}(t)\}_{t\ge 0}$ generated by~$\mathcal{A}$.
To assert that the problem is well-defined and that a unique (mild) solution exists, we make numerous assumptions.
Firstly, for $\mathcal{B}$ to be well-defined, we need to assert the existence of a trace operator $\mathcal{T}$ via trace theorems, e.g. requiring $\Omega$ to be a Lipschitz domain \cite{Din96} or $\partial \Omega$ to be $C^1$ \cite[Sec. 5.5, Thm. 1]{Eva10}. 
By formulating the following assumption, we also include generic boundary conditions \cite{Sch20}.

\begin{assumption} \label{assumption::semigroup}
    For a reflexive Banach space $X$, the linear (differential) operator $\mathcal{A}' = \mathcal{A}|_{\mathrm{ker} \mathcal{B}}$
    generates a strongly continuous one-parameter semigroup of bounded linear operators $\{\mathcal{S}(t)\}_{t\ge 0}$ on $X$. The domain $D(\mathcal{A'}) =D(\mathcal{A}) \cap \mathrm{ker}(\mathcal{B})$ is a linear subspace of X. The boundary operator $\mathcal{B}$ is right invertible with the inverse $\mathcal{B}_0$ and $\mathcal{A}' \mathcal{B}_0$ is a bounded linear operator from $U$ to $X$.
\end{assumption}

A mild solution can then be defined by an abstract variation-of-constants type formula with an extended solution space. To make the presentation accessible to a large audience, we omit the general functional analytic framework here (see \cite{Sch20} and the references therein for the details). Furthermore, we drop the prime on the operator $\mathcal{A}'$ to simplify the notation. 

\begin{remark} \label{rem::inhomogeneous}
    The results derived below can be generalized directly to systems modeled by inhomogeneous PDEs of type
    \begin{equation*}
        \partial_t \mathbf{u} = \mathcal{A} \mathbf{u} + \mathbf{f}
    \end{equation*} 
    with $\mathbf{f}(t) \in D(\mathcal{A})$ and $\mathbf{f}$ independent of $\mathbf{u}$. To simplify notation, we continue all investigations without $\mathbf{f}$.
\end{remark}

Similarly to \cite{HilU22} for a finite-dimensional setting, the NN approximation of the solution of the BIVP~\eqref{eq::pde} is interpreted as the solution of a perturbed problem. Here we consider two types of perturbed problems, the first has the same boundary condition as the unperturbed problem, but has perturbation terms in the initial condition and the differential equation. The perturbed problem reads
\begin{equation}\label{eq::pde_approx}
	\left\{\quad
    \begin{aligned}
        \dt \hatu &= \mathcal{A}\hatu + \mathbf{R} \qquad &\mathrm{in } \quad  \timeInt \times \Omega, \\
        \hatu &= \mathbf{u_0} + \mathbf{R_0} \qquad& \mathrm{in } \quad  \{t=0\} \times \Omega ,\\
        \mathcal{B}\hatu &= \mathbf{u_b} \qquad &\mathrm{in } \quad \timeInt \times \partial \Omega ,  
    \end{aligned}\right.
\end{equation}
with perturbances $\mathbf{R_0} \in D(\mathcal{A})$ and $\mathbf{R}:\; \timeInt \rightarrow X$. 
For an approximate solution $\hatu$ given by a NN, we can determine the  terms $\mathbf{R}, \; \mathbf{R_0}$ by computing the residuals
\begin{equation} \label{eq::residuals}
    \begin{aligned}
        \mathbf{R_0} &\coloneqq \hat{\mathbf{u}}(0, \cdot) -\mathbf{u_0} \quad &\text{in } &\Omega , \\
        \mathbf{R} &\coloneqq \dt \hatu - \mathcal{A}\hatu \quad &\text{in }& \timeInt \times \Omega .
    \end{aligned}
\end{equation}
This calculation requires $\hatu$ to be differentiable in time, which constrains the choice of the activation function of the NN. Also, we assume the following for the BIVP~\eqref{eq::pde_approx}.
\begin{assumption} \label{assumption::approx}
    The perturbed BIVP \eqref{eq::pde_approx} is such that $\mathbf{R}$ is Lipschitz continuous in $t\in \timeInt$ and $\mathbf{u_0}+\mathbf{R_0} \in D(\mathcal{A})$. 
\end{assumption}

Regarding the approximation by a NN, for most activation functions, the Lipschitz condition is satisfied, and only the second condition requires a more detailed investigation of the corresponding function spaces. 

The second problem involves additionally a perturbation in the boundary condition
\begin{equation}\label{eq::pde_approx_soft_bcs}
    \left\{\quad\begin{aligned}
        \dt \hatu &= \mathcal{A}\hatu + \mathbf{R} \qquad &\mathrm{in } \quad  \timeInt \times \Omega ,\\
        \hatu &= \mathbf{u_0} + \mathbf{R_0} \qquad& \mathrm{in } \quad  \{t=0\} \times \Omega ,\\
        \mathcal{B}\hatu &= \mathbf{u_b} + \mathbf{R_b}\qquad &\mathrm{in } \quad \timeInt \times \partial \Omega .
    \end{aligned}\right.
\end{equation}
We consider this system as a boundary control system (BCS, c.f.~\cite{TucW09}) and use the notion of input-to-state-stable (ISS) systems, see \cite{Son89}, to later derive rigorous error bounds.
\begin{definition}\label{def::iss_iiss}
    System~\eqref{eq::pde_approx_soft_bcs} is said to be ISS w.r.t.~the boundary term $\mathbf{u_b}+\mathbf{R_b}$, if there exist functions $\beta, \gamma$ and an operator $\mathcal{C}$ such that
    \begin{equation}\label{eq::iss}
        \begin{aligned}
            \| \mathbf{u}(t, \cdot) \| \le & \beta (\|\mathbf{u}(0, \cdot) \|, t) \\ &+ \gamma (\|\mathcal{C} (\mathbf{R_b}(s) + \mathbf{u_b}(s))\|_{L^\infty(0,t; \Omega)}) .
        \end{aligned}
    \end{equation}
    Here, $\gamma\colon \R_+ \rightarrow \R_+$ is in $\mathcal{K}$, which means that it is continuous and strictly increasing with $\gamma(0)=0$. Similarily, $\beta\colon \R_+ \, \times\, \R_+ \rightarrow \R_+$ is in $\mathcal{KL}$, meaning $\beta(\cdot, t) \in \mathcal{K}$ for all $t\ge 0$ and $\beta(s, \cdot)$ is continuous and strictly decreasing to 0 for all $s>0$ \cite{Son89}.
\end{definition}

In the following, we consider an ISS BCS, where we set $\mathbf{u_0} = 0$ and $\mathbf{u_b}=0$, such that~\eqref{eq::iss} simplifies to the second term. As an example, for the heat equation (see \Cref{subsec::heat}) we use $\gamma(s) = 1/3 s$. For the imposed Dirichlet boundary, the operator $\mathcal{C} = \delta_{\partial\Omega}$ is the Dirac Distribution on the boundary of the domain $\Omega$. 

\begin{remark}
    We present reasoning and theorems based on the notion of ISS, however, all results can be transferred easily to integral input-to-state-stable (iISS) systems \cite{Son89,JacNPS18}.
\end{remark}

\begin{assumption} \label{assumption::iss}
    The BCS \eqref{eq::pde_approx_soft_bcs} is ISS with respect to the boundary values $\mathbf{u_b}+\mathbf{R_b}$.
\end{assumption}

With these definitions and assumptions given, we can clarify the intuitive understanding of \Cref{problemdescription::informal} and present the following rigorous formulation.

\begin{target} \label{problemdescription::inX}
    Assume that the operators $\mathcal{A}$, $\mathcal{B}$ in the BIVP~\eqref{eq::pde} satisfy \Cref{assumption::semigroup} and let $\uu$ denote the unique mild solution of~\eqref{eq::pde}. Furthermore, let \Cref{assumption::approx,assumption::iss} hold true and let $\hatu$ denote the mild solution of \eqref{eq::pde_approx} or \eqref{eq::pde_approx_soft_bcs}. Find a certificate $\varepsilon(t)\colon \timeInt \rightarrow \R_+$ such that
    \begin{equation*}
    \| \hat{\mathbf{u}}(t, \cdot)- \mathbf{u}(t, \cdot) \|_X \le \varepsilon(t)
    \end{equation*}
    and can be computed a posteriori without knowledge of the true solution $\mathbf{u}$.
\end{target}


\section{Error estimation} \label{sec::error_est}

We divide this section, which contains the main results of this work, into three subsections, whereof the first two are concerned with the two previously introduced perturbed BIVPs. Here, we adapt the terminology of hard and soft boundary constraints in neural networks \cite{YuLMK22}, meaning that hard boundary constraints denote the exact fulfilledness of the boundary conditions in BIVP \eqref{eq::pde_approx}, and soft boundary constraints correspond to the perturbed BIVP \eqref{eq::pde_approx_soft_bcs}. For the networks used to generate these approximate solutions, we require a network to be designed to fulfill the boundary condition exactly for hard boundary constraints or force the network to approximate the boundary condition by a suitable loss term for soft constraints. In the third subsection, we give an auxiliary result to use numerical integration methods to compute the error bounds.

\subsection{Hard boundary constraints}

A NN approximation with hard boundary constraints can be interpreted as solution to the perturbed BIVP~\eqref{eq::pde_approx} by defining the residuals as in~\eqref{eq::residuals}.

\begin{theorem}\label{thm::errorlimit}
    Let the BIVP \eqref{eq::pde} be given and suppose that \Cref{assumption::semigroup} is fulfilled. Furthermore, let $\hatu$ be  a mild solution to \eqref{eq::pde_approx} and presume that the terms $\mathbf{R}$ and $\mathbf{R_0}$ satisfy \Cref{assumption::approx}. Then, 
    \begin{equation} \label{eq::elim}
        \varepsilon(t) \coloneqq \| \mathcal{S}(t)\|_X  {\newdelta_0} + \int_0^t \|\mathcal{S}(t-s)\|_X {\newdelta}(s ) \,\mathrm{d}s ,
    \end{equation}
    wherein $\|\mathbf{R_0} \|_X \le \newdelta_0 \in \mathbb{R}_+$ and $\newdelta \colon \timeInt \rightarrow \mathbb{R}_+$, $\newdelta$ bounded and continuous with $\| \mathbf{R}(t) \|_X \le \newdelta(t)$ is a certificate for $\hatu$. 
\end{theorem}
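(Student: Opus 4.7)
The plan is to study the error $\mathbf{e}(t,\cdot) := \hatu(t,\cdot) - \uu(t,\cdot)$ and show that it itself solves a linear abstract Cauchy problem whose mild solution is explicit. By linearity of $\mathcal{A}$ and $\mathcal{B}$, subtracting \eqref{eq::pde} from \eqref{eq::pde_approx} yields
\begin{equation*}
\dt \mathbf{e} = \mathcal{A}\mathbf{e} + \mathbf{R}, \qquad \mathbf{e}(0,\cdot) = \mathbf{R_0}, \qquad \mathcal{B}\mathbf{e} = 0.
\end{equation*}
The key structural point is that the boundary data cancels because $\mathcal{B}\hatu = \mathcal{B}\uu = \mathbf{u_b}$ under the hard boundary constraint; hence $\mathbf{e}(t,\cdot) \in \mathrm{ker}(\mathcal{B})$ for every $t$, and the dynamics of $\mathbf{e}$ are governed by the restricted operator $\mathcal{A}' = \mathcal{A}|_{\mathrm{ker}\,\mathcal{B}}$. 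By \Cref{assumption::semigroup}, $\mathcal{A}'$ generates the strongly continuous semigroup $\{\mathcal{S}(t)\}_{t\ge 0}$, so that the error evolves within a genuine semigroup framework.

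Next I would invoke the abstract variation-of-constants (Duhamel) formula. Since \Cref{assumption::approx} guarantees that $\mathbf{u_0}+\mathbf{R_0} \in D(\mathcal{A})$ (hence $\mathbf{R_0}$ lies in $D(\mathcal{A}')$ after subtracting $\mathbf{u_0}$) and that $\mathbf{R}$ is Lipschitz continuous in $t$, the standard existence theory for inhomogeneous abstract Cauchy problems on a semigroup (see, e.g., the references cited after \Cref{assumption::semigroup}) applies, and the unique mild solution reads
\begin{equation*}
\mathbf{e}(t,\cdot) = \mathcal{S}(t)\mathbf{R_0} + \int_0^t \mathcal{S}(t-s)\mathbf{R}(s,\cdot)\ds.
\end{equation*}
Taking the norm in $X$, using the triangle inequality in its integral form, and bounding each semigroup action through the induced operator norm $\|\mathcal{S}(\cdot)\|_X$ gives
\begin{equation*}
\|\mathbf{e}(t,\cdot)\|_X \le \|\mathcal{S}(t)\|_X \|\mathbf{R_0}\|_X + \int_0^t \|\mathcal{S}(t-s)\|_X \|\mathbf{R}(s,\cdot)\|_X \ds.
\end{equation*}
Substituting the pointwise-in-time upper bounds $\|\mathbf{R_0}\|_X \le \newdelta_0$ and $\|\mathbf{R}(s,\cdot)\|_X \le \newdelta(s)$ yields exactly~\eqref{eq::elim}, showing $\varepsilon(t)$ is a certificate in the sense of \Cref{problemdescription::inX}.

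The main obstacle, and really the reason the assumption block in \Cref{assumption::semigroup,assumption::approx} was spelled out so carefully, is the step of certifying that $\mathbf{e}$ is indeed a mild solution of a semigroup-generated problem rather than a merely formal solution. One must verify that the cancellation $\mathcal{B}\mathbf{e}=0$ places $\mathbf{e}(t,\cdot)$ in $D(\mathcal{A}')$ for almost every $t$, that the forcing $\mathbf{R}(s,\cdot)$ is sufficiently regular in both $s$ and $X$ to make the Bochner integral $\int_0^t \mathcal{S}(t-s)\mathbf{R}(s,\cdot)\ds$ well defined, and that the resulting function is the unique mild solution. Once these functional-analytic details are in place, the actual norm estimate is routine. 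A secondary subtlety is that $\|\mathcal{S}(t)\|_X$ might grow in $t$, so boundedness of $\newdelta$ plus continuity of $t\mapsto \|\mathcal{S}(t)\|_X$ are needed to guarantee that $\varepsilon(t)$ is finite and computable by quadrature, which is the motivation for the third subsection of \Cref{sec::error_est}.
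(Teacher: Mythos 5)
Your proposal is correct and follows essentially the same route as the paper's own proof: subtracting the two BIVPs to obtain the error system with homogeneous boundary data, invoking the unique mild solution via the variation-of-constants formula under \Cref{assumption::semigroup,assumption::approx}, and bounding the result in norm by $\newdelta_0$ and $\newdelta(s)$ to arrive at \eqref{eq::elim}. Your added discussion of the functional-analytic details (well-definedness of the Bochner integral, $\mathbf{R_0}\in D(\mathcal{A}')$ via linearity and the boundary cancellation) merely makes explicit what the paper delegates to its citation of the semigroup literature.
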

\begin{proof}
    The dynamics of the error $\mathbf{e}(t,\mathbf{x}) \coloneqq  \hatu(t, \mathbf{x}) - \mathbf{u}(t, \mathbf{x})$ is governed by the BIVP 
    \begin{equation} \label{eq::pde_diff_system}
        \begin{aligned}
            \dt \mathbf{e} &= \mathcal{A}\mathbf{e}+ \mathbf{R} \qquad &\mathrm{in } \quad \timeInt \times \Omega,\\
            \mathbf{e} &= \mathbf{R_0} \qquad& \mathrm{in } \quad \{t=0\} \times \Omega , \\
            \mathcal{B}\mathbf{e} &= 0 \qquad &\mathrm{in } \quad \timeInt \times \partial \Omega ,
        \end{aligned}
    \end{equation}
    with $\mathbf{R}$, $\mathbf{R_0}$ as defined in \eqref{eq::residuals}.
    Using \Cref{assumption::semigroup}, the unique mild solution \cite[Ch.$\,$4, Cor.$\,$2.11.]{Paz89} of BIVP \eqref{eq::pde_diff_system} is given by
    \begin{equation*} 
        \mathbf{e}(t,\cdot) = \mathcal{S}(t)\mathbf{R_0}(\cdot) + \int _0^t \mathcal{S}(t-s)\mathbf{R}(s, \cdot) \mathrm{d}s.
    \end{equation*}

    Controlling the residuals from above by $\newdelta, \; \newdelta_0$, one can limit $\|\mathbf{e}(t, \cdot)\|_X$ from above by $\varepsilon$ as defined in \eqref{eq::elim}.
\end{proof}

\begin{corollary}\label{lem::errorlimit_strongly}
	Let all prerequisites for \Cref{thm::errorlimit} be fulfilled. Then there exist constants $M\geq 1$, $\omega\in\R$ such that
	\begin{equation} \label{eq::eps_lim}
		\varepsilon(t) \le \tilde{\varepsilon}(t) \vcentcolon= M \mathrm{e}^{\omega t} {\newdelta_0} + \int_0^t M \mathrm{e}^{\omega (t-s)} {\newdelta}(s )\ds .
	\end{equation}
	If furthermore $\{S(t)\}_{t\ge 0}$ is a
	\begin{itemize}
		\item $\omega$-contraction semigroup, then $M=1$;
		\item exponentially decaying semigroup, then $\omega < 0$.
	\end{itemize}
\end{corollary}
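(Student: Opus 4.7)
The plan is to bound $\|\mathcal{S}(t)\|_X$ from above by a standard exponential estimate available for every $C_0$-semigroup, and then insert this bound termwise into the formula~\eqref{eq::elim} from \Cref{thm::errorlimit}. Since \Cref{assumption::semigroup} guarantees that $\{\mathcal{S}(t)\}_{t\ge 0}$ is strongly continuous on the reflexive Banach space $X$, a classical semigroup-theoretic result (see, e.g., \cite[Ch.~1, Thm.~2.2]{Paz89}) supplies constants $M\ge 1$ and $\omega\in\R$ with $\|\mathcal{S}(t)\|_X \le M\mexp^{\omega t}$ for all $t\ge 0$.

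With this estimate at hand, I substitute it into the two occurrences of the operator norm in~\eqref{eq::elim}. Because $\newdelta_0\ge 0$ and $\newdelta(s)\ge 0$ by construction, and since $M\mexp^{\omega(t-s)}>0$, replacing $\|\mathcal{S}(t)\|_X$ and $\|\mathcal{S}(t-s)\|_X$ by their upper bounds preserves the inequality pointwise under the integral, giving $\varepsilon(t)\le\tilde{\varepsilon}(t)$ directly. Well-definedness of $\tilde{\varepsilon}$ is immediate: boundedness and continuity of $\newdelta$ on $[0,t]$, together with continuity of $s\mapsto\mexp^{\omega(t-s)}$, make the integrand continuous on a compact interval.

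The two structural refinements require no additional argument beyond invoking the relevant definitions. If $\{\mathcal{S}(t)\}_{t\ge 0}$ is $\omega$-contractive, then $\|\mathcal{S}(t)\|_X\le\mexp^{\omega t}$ by definition, so the growth estimate holds with $M=1$. If the semigroup is exponentially decaying, then by definition a growth exponent $\omega<0$ is admissible. In summary, the statement is a one-line corollary of \Cref{thm::errorlimit} combined with the textbook $C_0$-semigroup growth bound; there is no genuine obstacle and nothing to compute beyond identifying which semigroup fact to cite in each of the two refined cases.
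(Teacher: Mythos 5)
Your proposal is correct and matches the paper's own proof: both invoke the standard growth bound for strongly continuous semigroups \cite[Ch.~1, Thm.~2.2]{Paz89}, substitute $\|\mathcal{S}(t)\|_X \le M\mathrm{e}^{\omega t}$ into \eqref{eq::elim}, and settle the two refined cases directly from the definitions of $\omega$-contraction and exponentially decaying semigroups. Your version is, if anything, slightly more explicit than the paper's, since you spell out the nonnegativity of $\newdelta_0$ and $\newdelta$ that justifies the termwise substitution.
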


\begin{proof}
    By applying general results on strongly continuous semigroups \cite[Ch. 1, Thm. 2.2]{Paz89} to \eqref{eq::elim}, the existence of $M \ge 1$ and $\omega \ge 0$ for \eqref{eq::eps_lim} is shown.
    An $\omega$-contraction semigroup is characterized by $M=1$. By definition of an exponentially decaying semigroup, there exist constants $M \ge 1$ and $\mu > 0$ such that $\|\mathcal{S}(t)\|_X \le M\mexp^{-\mu t}$.
\end{proof}

Although \Cref{lem::errorlimit_strongly} only establishes the existence of $M$ and $\omega$, we emphasize that these constants can be computed in several applications, see \Cref{sec::NumericalResults}. For example, we can use the spectral mapping theorem (e.g., \cite[Ch. 2, Thm. 2.4]{Paz89}) to determine the growth bound $\omega$ to be the real part of the largest eigenvalue of the infinitesimal generator of the semigroup.

\subsection{Soft boundary constraints}

Enforcing boundary values with soft boundary constraints introduces an additional perturbation in the boundary, which is reflected in the perturbed BIVP~\eqref{eq::pde_approx_soft_bcs}. 

\begin{theorem}\label{thm::errorlimit_iss}
    Let the BIVP \eqref{eq::pde} fulfill \Cref{assumption::semigroup} and assume that it is ISS according to \Cref{assumption::iss}. Let the solution $\hatu$ to the perturbed problem \eqref{eq::pde_approx_soft_bcs} satisfy \Cref{assumption::approx}, then
    \begin{equation*} 
        \begin{aligned}
       	\varepsilon(t) \coloneqq& \; M \mexp^{\omega t }{\newdelta_0} + \int_0^t M \mexp^{\omega (t -s)} {\newdelta}(s ) \,\mathrm{d}s  \\ 
        &\qquad+\; \gamma ( \|\mathcal{C}\mathbf{R_b}\|_{L^\infty(0,t;\Omega)}),
        \end{aligned}
    \end{equation*}
    with $\|\mathbf{R_0} \|_X \le \newdelta_0 \in \mathbb{R}_+$ and $\newdelta \colon \timeInt \rightarrow \mathbb{R}_+$, $\newdelta$ bounded and continuous with $\| \mathbf{R}(t) \|_X \le \newdelta(t)$ is a certificate for $\hatu$.
\end{theorem}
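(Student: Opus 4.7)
The plan is to reduce to \Cref{thm::errorlimit} and \Cref{lem::errorlimit_strongly} by decomposing the error into a part driven by interior and initial perturbations and a part driven by the boundary perturbation, and then to invoke the ISS property for the latter.

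\textbf{Step 1: Error system.} As in the proof of \Cref{thm::errorlimit}, I would first write down the system governing $\mathbf{e}(t,\cdot) := \hatu(t,\cdot) - \uu(t,\cdot)$. Subtracting \eqref{eq::pde} from \eqref{eq::pde_approx_soft_bcs} yields
\begin{equation*}
\dt \mathbf{e} = \mathcal{A}\mathbf{e} + \mathbf{R}, \qquad \mathbf{e}(0,\cdot) = \mathbf{R_0}, \qquad \mathcal{B}\mathbf{e} = \mathbf{R_b},
\end{equation*}
which differs from \eqref{eq::pde_diff_system} only through the inhomogeneous boundary data $\mathbf{R_b}$.

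\textbf{Step 2: Linear splitting.} Exploiting linearity of $\mathcal{A}$ and $\mathcal{B}$, I would split $\mathbf{e} = \mathbf{e}_1 + \mathbf{e}_2$, where $\mathbf{e}_1$ solves the hard-boundary problem
\begin{equation*}
\dt \mathbf{e}_1 = \mathcal{A}\mathbf{e}_1 + \mathbf{R}, \qquad \mathbf{e}_1(0,\cdot) = \mathbf{R_0}, \qquad \mathcal{B}\mathbf{e}_1 = 0,
\end{equation*}
and $\mathbf{e}_2$ solves the purely boundary-driven problem
\begin{equation*}
\dt \mathbf{e}_2 = \mathcal{A}\mathbf{e}_2, \qquad \mathbf{e}_2(0,\cdot) = 0, \qquad \mathcal{B}\mathbf{e}_2 = \mathbf{R_b}.
\end{equation*}
Both are well-posed as mild solutions by \Cref{assumption::semigroup,assumption::approx}.

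\textbf{Step 3: Apply existing bounds.} Applying \Cref{thm::errorlimit} to $\mathbf{e}_1$ and then invoking \Cref{lem::errorlimit_strongly} yields
\begin{equation*}
\|\mathbf{e}_1(t,\cdot)\|_X \le M\mexp^{\omega t}\newdelta_0 + \int_0^t M\mexp^{\omega(t-s)}\newdelta(s)\ds.
\end{equation*}
For $\mathbf{e}_2$, I would use \Cref{assumption::iss}: the system for $\mathbf{e}_2$ is of the form \eqref{eq::pde_approx_soft_bcs} with $\mathbf{u_0}=0$, $\mathbf{R_0}=0$, $\mathbf{R}=0$, and $\mathbf{u_b}=0$, so that the $\beta$-term in \eqref{eq::iss} vanishes and the ISS estimate collapses to
\begin{equation*}
\|\mathbf{e}_2(t,\cdot)\|_X \le \gamma\bigl(\|\mathcal{C}\mathbf{R_b}\|_{L^\infty(0,t;\Omega)}\bigr).
\end{equation*}
A triangle inequality then produces the claimed certificate.

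\textbf{Main obstacle.} The routine piece is the linear superposition; the delicate piece is justifying that $\mathbf{e}_2$ may be treated as the mild solution of a boundary control system to which \Cref{assumption::iss} applies, since the ISS assumption is phrased for the full perturbed system \eqref{eq::pde_approx_soft_bcs}. This requires appealing to the boundary control system framework of~\cite{TucW09,Sch20} to ensure that the ISS gain $\gamma$ and operator $\mathcal{C}$ inherited from the BIVP continue to control the pure boundary-input subproblem. Once this is in place, the remaining arguments are direct applications of the preceding theorem and corollary.
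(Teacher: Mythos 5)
Your proposal is correct and takes essentially the same route as the paper: your superposition $\mathbf{e}=\mathbf{e}_1+\mathbf{e}_2$ is exactly the paper's decomposition via the intermediate mild solution $\tildeu$ of \eqref{eq::pde_approx}, with $\mathbf{e}_1=\tildeu-\uu$ bounded by \Cref{lem::errorlimit_strongly} and $\mathbf{e}_2=\hatu-\tildeu$ handled by the ISS estimate with vanishing $\beta$-term. The delicate point you flag is likewise resolved in the paper's setup, where \Cref{def::iss_iiss} is applied with $\mathbf{u_0}=0$ and $\mathbf{u_b}=0$ so that the ISS gain $\gamma$ and operator $\mathcal{C}$ control precisely the boundary-driven subproblem.
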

\begin{proof}
    We introduce $\tildeu$ as a mild solution to \eqref{eq::pde_approx} with the same perturbations $\mathbf{R_0}, \mathbf{R}(t)$ considered in the BIVP \eqref{eq::pde_approx_soft_bcs}. Since a mild solution $\uu$ to \eqref{eq::pde} exists, and \Cref{assumption::approx} is valid, there exists a unique mild solution to \eqref{eq::pde_approx}. Hence, the error can be split into
    \begin{equation*}
        \| \hatu - \uu \|_X \le \| \hatu - \tildeu \|_X + \| \tildeu - \uu\|_X .
    \end{equation*}
    According to \Cref{lem::errorlimit_strongly} we can bound the latter part by~\eqref{eq::eps_lim}. 
    For the first contribution, we proceed similarly to the proof of~\Cref{thm::errorlimit} by observing that the dynamics of the error $\tildee(t,\mathbf{x}) \coloneqq \hatu - \tildeu $ is determined by the BIVP
    \begin{equation*} 
        \begin{aligned}
            \dt \tildee &= \mathcal{A}\tildee \qquad &\mathrm{in } \quad \timeInt \times \Omega,\\
            \tildee &= 0 \qquad& \mathrm{in } \quad \{t=0\} \times \Omega  ,\\
            \mathcal{B}\tildee &= \mathbf{R_b}\qquad &\mathrm{in } \quad \timeInt \times \partial \Omega .
        \end{aligned}
    \end{equation*}
    Now, we use the ISS property according to \Cref{assumption::iss} and notice that $\tildee(0, \cdot) = 0$ implies $ \beta(\| \tildee(0, \cdot) \|, t) =0$ (with~$\beta$ as in \Cref{def::iss_iiss}) and hence conclude
    \begin{equation*}
        \| \tildee(t, \cdot) \|_X \le \gamma (\|\mathcal{C}\mathbf{R_b} \|_{L^\infty(0, t; \Omega)}).
    \end{equation*}
     Adding the contributions to the error concludes the proof.
\end{proof}

\subsection{Integral approximation}

As in \cite{HilU22}, we give an auxiliary result to apply the previous theorems using numerical integration techniques to compute the integrals in \Cref{thm::errorlimit}, \Cref{thm::errorlimit_iss} and \Cref{lem::errorlimit_strongly}, respectively. Here, we use the composite trapezoidal rule to approximate the integral as
\begin{equation*}
    \int_0^t M\mexp^{\omega (t-s)} \newdelta(s) \ds \approx M \mexp^{\omega t} \hat{I}_n(0,t,\mathrm{e}^{-\omega s}\newdelta(s))
\end{equation*}
using $n$ equally spaced subintervals $[s_i, s_{i+1}]$, $0 = s_1< ...< s_n= t$, meaning $s_{i+1}-s_{i} = \Delta s$. Therein the right hand side is defined via
\begin{equation}
    \begin{aligned} \label{eq::trapz}
        \hat{I}_n&(t, \mexp^{-\omega s } \newdelta(s) ) \coloneqq \\ 
        &\sum_{i=1}^{n-1} \frac{\Delta s}{2} \left( \mexp^{-\omega s_i} \newdelta(s_{i}) + \mexp^{-\omega s_{i+1}} \newdelta(s_{i+1}) \right). 
    \end{aligned}
\end{equation}
Numerical computation of the integral introduces an additional error 
\begin{equation} \label{eq::trapz_err}
    \Eint \coloneqq M \mexp^{\omega t} \frac{K t^3}{12 n^2},
\end{equation}
assuming that $\newdelta(s)$ is sufficiently smooth and that $K$ fulfills
\begin{equation} \label{eq::K}
    \left|\tfrac{\mathrm{d}^2}{\mathrm{d} s^2} (\mexp^{-\omega s} \newdelta(s))\right| \le K .
\end{equation}
Summarizing the previous discussion yields the following result.

\begin{proposition}
	Consider the BIVPs~\eqref{eq::pde} and~\eqref{eq::pde_approx} (or \eqref{eq::pde_approx_soft_bcs}) and suppose that \Cref{assumption::semigroup,assumption::approx} (and \ref{assumption::iss}) are satisfied. Then
    \begin{equation*} 
        \varepsilon(t) \le \;  M \mexp^{\omega t} \newdelta_0 + M \mexp^{\omega t}\hat{I}_{n}(t, M \mexp^{-\omega s}  \newdelta(s)) + \Eint + \Ebc
    \end{equation*}
    with $\newdelta_0, \,\,\newdelta$ as in \Cref{thm::errorlimit} under the additional assumption that $\newdelta\in \mathrm{C}^3(\timeInt, \mathbb{R})$. Furthermore the numerical integral \eqref{eq::trapz} and the error of the numerical integration \eqref{eq::trapz_err} is used assuming $K$ can be found according to \eqref{eq::K}. When considering the BIVP~\eqref{eq::pde_approx_soft_bcs}, $\Ebc = \gamma (\|\mathcal{C}\mathbf{R_b} \|_{L^\infty(0, t; \Omega)})$, else the term vanishes.
    \label{lemma::errorlimit_trpz}
\end{proposition}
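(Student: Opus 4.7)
The plan is to combine the earlier estimates from Corollary~\ref{lem::errorlimit_strongly} (for the hard-constraint system~\eqref{eq::pde_approx}) or Theorem~\ref{thm::errorlimit_iss} (for the soft-constraint system~\eqref{eq::pde_approx_soft_bcs}) with the standard error bound for the composite trapezoidal rule. First, I would invoke the relevant earlier result to obtain
\begin{equation*}
\|\hatu(t,\cdot)-\uu(t,\cdot)\|_X \;\le\; M\mexp^{\omega t}\newdelta_0 \;+\; \int_0^t M\mexp^{\omega(t-s)}\newdelta(s)\ds \;+\; \Ebc,
\end{equation*}
where $\Ebc = \gamma(\|\mathcal{C}\mathbf{R_b}\|_{L^\infty(0,t;\Omega)})$ in the soft-constraint case and $\Ebc=0$ in the hard-constraint case. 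In both situations the first two terms are the content of \Cref{thm::errorlimit} together with \Cref{lem::errorlimit_strongly}, while the $\Ebc$ contribution is precisely the ISS term isolated in the proof of \Cref{thm::errorlimit_iss}.

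Next, I would factor $M\mexp^{\omega t}$ out of the integral,
\begin{equation*}
\int_0^t M\mexp^{\omega(t-s)}\newdelta(s)\ds \;=\; M\mexp^{\omega t}\int_0^t \mexp^{-\omega s}\newdelta(s)\ds,
\end{equation*}
and replace the remaining integral by its composite trapezoidal approximation $\hat{I}_n(t,\mexp^{-\omega s}\newdelta(s))$ as defined in~\eqref{eq::trapz}. The quadrature error is then controlled via the classical estimate for the composite trapezoidal rule: for any $g\in C^2([0,t])$ and $n$ equally spaced subintervals,
\begin{equation*}
\left|\int_0^t g(s)\ds - \hat{I}_n(t,g)\right| \;\le\; \frac{t^3}{12\,n^2}\,\sup_{s\in[0,t]}|g''(s)|.
\end{equation*}
Applying this with $g(s) \vcentcolon= \mexp^{-\omega s}\newdelta(s)$—which lies in $C^2([0,t])$ because $\newdelta\in C^3(\timeInt,\mathbb{R})$ and the exponential is smooth—and using the bound $|g''(s)|\le K$ from~\eqref{eq::K}, multiplication by $M\mexp^{\omega t}$ yields exactly the quadrature contribution $\Eint = M\mexp^{\omega t}\,Kt^3/(12n^2)$ introduced in~\eqref{eq::trapz_err}.

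The assertion then follows by a single application of the triangle inequality, summing the initial-residual term $M\mexp^{\omega t}\newdelta_0$, the discretised integral $M\mexp^{\omega t}\hat{I}_n(t,\mexp^{-\omega s}\newdelta(s))$, the quadrature error $\Eint$, and the boundary contribution $\Ebc$. I do not anticipate a substantial obstacle: the result is essentially a bookkeeping corollary that packages \Cref{thm::errorlimit_iss} together with the well-known trapezoidal error estimate into a form in which every quantity on the right-hand side is explicitly computable from $\hatu$ and the a priori data of the problem. The only point requiring mild care is verifying that the assumption $\newdelta\in C^3$ is indeed enough to guarantee existence of the constant $K$ in~\eqref{eq::K}, which follows immediately from the product rule applied to $\mexp^{-\omega s}\newdelta(s)$ on the compact interval $[0,t]$.
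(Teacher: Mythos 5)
Your proof is correct and follows essentially the same route as the paper, which states this proposition as a summary of the preceding discussion: invoke \Cref{thm::errorlimit}, \Cref{lem::errorlimit_strongly}, or \Cref{thm::errorlimit_iss} for the continuous bound (with $\Ebc$ present only in the soft-constraint case), factor out $M\mexp^{\omega t}$, and absorb the quadrature error of the composite trapezoidal rule via the classical $C^2$ estimate with $|g''|\le K$ for $g(s)=\mexp^{-\omega s}\newdelta(s)$. Your closing remark that $\newdelta\in C^3$ on the compact interval suffices to produce $K$ in \eqref{eq::K} is a correct and slightly more explicit justification than the paper provides.
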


\begin{remark}
    The additional integration error considered in \Cref*{lemma::errorlimit_trpz} accounts for numerical integration over time only. The numerical error introduced by integrating over space is already included and part of $\newdelta(t)$ and $\newdelta_0$.
\end{remark}
\begin{remark}
    The previous derivations also show that an additional regularization of the derivatives (c.f.~\cite{YuLMK22}) of the residuum could prove useful to reduce the integration error.
\end{remark}

\section{Methodology: PINNs} \label{sec::methodology}

We want to apply the results from the previous \Cref{sec::error_est} to PINNs as introduced in \cite{RaiPK19}. Therein the the data-driven loss used during training of feed-forward NNs is supplemented by a contribution reflecting the PDE (the physics-informed contribution), yielding 
\begin{equation} \label{eq::loss}
    L = \tfrac{1}{1+\kappa} L_\mathrm{eq} +\tfrac{\kappa}{1+\kappa} L_\mathrm{data}  + \rho L_\mathrm{space} .
\end{equation}
The parameter $\kappa \in \R_+$ defines the relation between data-driven and physics-informed contributions, while $\rho\geq 0$ scales the contribution of soft boundary conditions and further spatial properties, see below for further details. The data-driven contribution is encoded in 
\begin{equation*}
    L_\mathrm{data} = \frac{1}{N_\mathrm{data}} \sum_{i=1}^{N_\mathrm{data}} \|  \hatu(t_{\mathrm{data}, i}, \mathbf{x}_{\mathrm{data}, i}) - \uu_{\mathrm{data}, i}\|^2
\end{equation*}
for a given training dataset 
\begin{equation*} (t_{\mathrm{data}, i}, \mathbf{x}_{\mathrm{data}, i}, \uu_{\mathrm{data}, i})_{i=1,...,N_\mathrm{data}}.
\end{equation*}
We need to know the true solution  $\uu_{\mathrm{data}, i} = \uu(t_{\mathrm{data}, i}, \mathbf{x}_{\mathrm{data}, i})$ for this and hence we reduce data-driven contributions to data given by the initial condition
\begin{equation*} 
    L_\mathrm{data} = \frac{1}{N_\mathrm{data}} \sum_{i=1}^{N_\mathrm{data}} \|  \hatu(0, \mathbf{x}_{\mathrm{data}, i}) - \mathbf{u_0}( \mathbf{x}_{\mathrm{data}, i})\|^2 .
\end{equation*}

The physics-informed loss  
\begin{equation*}
    L_\mathrm{eq} = \frac{1}{N_\mathrm{eq}} \sum_{i=1}^{N_\mathrm{eq}} \| \partial_t \hatu(t_{\mathrm{eq} , i}, \mathbf{x}_{\mathrm{eq} , i}) - \mathcal{A}\hatu(t_{\mathrm{eq} , i}, \mathbf{x}_{\mathrm{eq} , i})\|^2
\end{equation*}
evaluates the residual of the PDE for a set of $N_\mathrm{eq}$ collocation points $(t_{\mathrm{eq} , i}, \mathbf{x}_{\mathrm{eq} , i})_{i=1,..., N_\mathrm{eq}}$. In particular, this contribution to the loss term aims at minimizing $\mathbf{R}$, which is used in the error bounds derived in \cref{sec::error_est}.

The third contribution $L_\mathrm{space}$ accounts for space constraints such as the divergence-freeness of solutions to the Navier-Stokes equations or soft boundary constraints. The parameter $\rho$ scales this loss relative to the other two contributions.

We use NNs with $n_\mathrm{i}\in \mathbb{N}$ input neurons, $n_\mathrm{o}\in \mathbb{N}$ output neurons and $n_\mathrm{hl}\in \mathbb{N}$ hidden layers with $n_\mathrm{hn}\in \mathbb{N}$ neurons each as basis for networks used in the numerical experiments. In the following two subsections, we discuss the modifications in network topology or loss functions which are necessary to realize hard and soft boundary constraints.

\begin{remark}
    In case no further control parameters are given, $n_\mathrm{i}$ corresponds to the dimension of $\timeInt \times \Omega$, i.e., $n_\mathrm{i} = d+1$. Similarly, assuming that no hidden variables are included in the PDE which are not described explicitly in the PDE (like the pressure in the Navier-Stokes equations, c.f. \Cref{subsec::NSE}), $n_\mathrm{o} $ corresponds to the dimension of $\uu$, i.e., $n_\mathrm{o} = n$.
\end{remark}

\subsection{Hard boundary constraints}
\label{subsec::hard_bc}

The implementation of hard boundary constraints requires a modification of the network topology. For the following numerical experiments, we need topologies to account for Dirichlet or periodic boundary conditions. For Dirichlet boundary conditions, we extend the purely sequential core NN by a second path as depicted in \Cref{fig::model_dirichlet_bc}. In the homogeneous case, we use a mask (as presented for arbitrary boundary conditions in \cite{McF06,McFM09}  and therein called length factor) for a d-dimensional rectangular domain of width $w = (w_0,..., w_d)$ with center $x_0 = (x_{1,0}, ..., x_{d, 0})$ of the type
\begin{equation*}
    \lambda(\mathbf{x}) = \prod_{i=1}^d \left[ (\tfrac{w_i}{2})^2 - (x_i - x_{i,0})^2 \right].
\end{equation*}
For inhomogeneous problems, we add a suitable offset 
\begin{equation*}
    \boldsymbol{\chi}(t, \mathbf{x}) = \mathbf{u_b}(t, \mathbf{x}) \qquad \mathrm{for }\;\; t\in\timeInt,\quad \mathbf{x}\in \partial \Omega.
\end{equation*} 
We can summarize the model to be 
\begin{equation} \label{eq::hard_bcs}
    \hatu(t, x)  = \lambda(\mathbf{x}) \cdot \mathbf{\hat{\hat{u}}}(t, \mathbf{x}) + \boldsymbol{\chi}(t, \mathbf{x})
\end{equation}
wherein $\mathbf{\hat{\hat{u}}}$ is the output of the sequential core NN.

For periodic boundary condition, we consider again a d-dimensional rectangular domain of width $w = (w_0,..., w_d)$ and follow \cite{DonN21} to construct a periodic base layer for one (the $i$-th) space dimension with~$n_\mathrm{p}$ units as
\begin{equation} \label{eq::per_base}
    \sigma ( \mathbf{A} \cdot \cos(\boldsymbol{\nu} x + \boldsymbol{\phi}) + \mathbf{B}),
\end{equation}
wherein $\sigma$ is the activation function. Furthermore, $\mathbf{A}, \mathbf{B}, \boldsymbol{\nu}, \boldsymbol{\phi} \in \R^{n_p}$. The frequency $\nu$ may not be trained but needs to be fixed by the period $ w_i$ in the respective space direction as $\boldsymbol{\nu} =2 \pi / w_i  \cdot \mathbf{I}_{n_p}$. Here, $\mathbf{I}_{n_p}$ denotes the vector of all ones in $\R^{n_p}$. 

The non-periodic input variables are processed by a dense layer.
Exemplarily, a non-sequential network realizing this is shown in \Cref{fig::model_periodic_bc}. To enhance training performance, we introduce a small dense network before the periodic layers to model the dependence of $\mathbf{A}^i, \mathbf{B}^i, \boldsymbol{\phi}^i$ on $t$ as an extension to the suggested modelling in \cite{DonN21}. 

\begin{figure} 
    \centering
    \scalebox{0.7}{
        \begin{tikzpicture}[ node distance = 2cm, >=latex,block/.style={draw, fill=white, rectangle, minimum height=3em, minimum width=8em}]
            \node[block] (Input) {Input};
            \begin{scope}[node distance=10mm]
                \node[block, below =of Input] (NN){Hidden layers};
                \node[block, below =of NN] (Output){NN Output};
            \end{scope}
            \node[block, right =of NN] (LF) {Mask};

            \begin{scope}[node distance=28mm]
                \node[block, below =of LF] (RO) {$\times$};
            \end{scope}
            \node[block, right =of LF] (OFS) {Offset};

            \begin{scope}[node distance=46mm]
                \node[block, below =of OFS] (ROO) {$+$};
            \end{scope}
    
            \draw[<-] (Input) -- ++ (0,1.5) node[right] {$\mathbf{x}$};
            \draw[<-, dashed] (Input.120) -- ++ (0,1) node[left] {$t$};

            \draw[->] (Input) -- (NN); 
            \draw[->, dashed] (Input.240) -- (NN.120); 
            \draw[->, dash dot dot] (NN) -- ++ (0,-1)  -| node[pos=0.25] {} (Output);
            \draw[->,dash dot dot] (Output) -- ++ (0,-0.8)  -| node[pos=0.05, below] {$\mathbf{\hat{\hat{u}}}(t,\mathbf{x})$} (RO.140); 
        
            \draw[->] (Input.320) -- ++ (0,-0.5)  -| node[pos=0.25, below] {$\mathbf{x}$} (LF); 
            \draw[->] (Input.320) -- ++ (0,-0.5)  -| node[above] {} (OFS); 
            \draw[->, dashed] (Input.330) -- ++ (0,-0.2)  -| node[pos=0.2, above] {$t$}  (OFS.50); 
            \draw[->, dash dot dot] (LF) -- ++ (0,-1)  -| node[pos=0.7, right] {$\lambda(\mathbf{x})$} (RO); 
            \draw[->,dash dot dot] (OFS) -- ++ (0,-1)  -| node[pos=0.7, right] {$\boldsymbol{\chi}(t,\mathbf{x})$} (ROO); 

            \draw[->,dash dot dot] (RO) -- ++ (0,-0.8) -| node[pos=0.2, below] {$\lambda(\mathbf{x}) \cdot \mathbf{\hat{\hat{u}}}(t, \mathbf{x})$} (ROO.140);

            \draw[->,dash dot dot] (ROO) -- ++ (0,-1) node[left] {$\hatu(t, \mathbf{x}) = \lambda(\mathbf{x}) \cdot \mathbf{\hat{\hat{u}}}(t, \mathbf{x}) + \boldsymbol{\chi}(t, \mathbf{x})$};

        \end{tikzpicture}
    }    
    \caption{NN realizing hard Dirichlet boundary constraints $\mathbf{u_b}(t, \mathbf{x})$ for $\mathbf{x} \in \partial \Omega$, $t\in \timeInt$. Input parameters are realized as dashed (time) and continuous (space) lines, processed intermediate or output values are shown as dash dotted lines.}
    \label{fig::model_dirichlet_bc}
\end{figure}
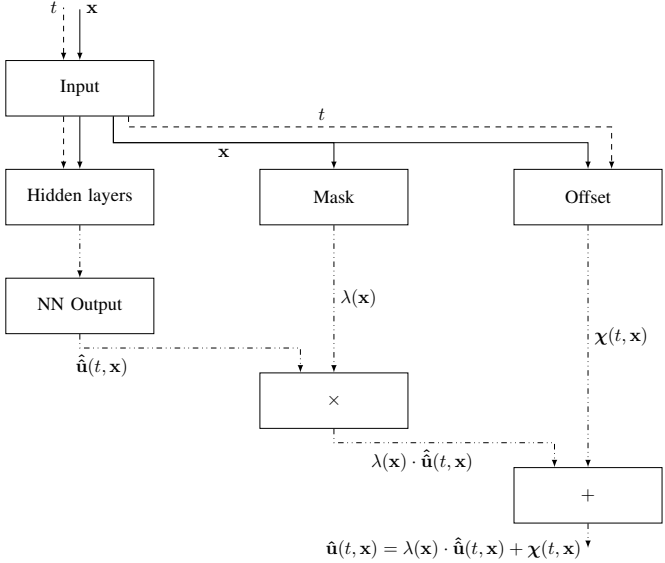

\begin{figure} 
    \centering
    \scalebox{0.7}{%
    \begin{tikzpicture}[node distance=10mm and 4mm,>=latex,block/.style={draw, fill=white, rectangle, minimum height=3em, minimum width=6em}]
            \node[block] (Input) {Input};
            \draw[<-] (Input.40) -- ++ (0,1) node[left] {$t$};
            \draw[<-, dashed] (Input.90) -- ++ (0,1) node[left] {$x_1$};
            \draw[<-, dotted] (Input.140) -- ++ (0,1) node[left] {$x_2$};
            
            \node[block, below =of Input] (A){$\mathbf{A}$};
            \node[block, left =of A] (B){$\mathbf{B}$};
            \node[block, right =of A] (phi){$\boldsymbol{\phi}$};
            \node[block, below =of B] (per){Periodic base layer};
            \node[draw, black!50 , dashed,fit=(A) (B) (phi) (per)] (a12) {};
            \node[right =of  per, black!50, xshift=0.5cm, align=left] (per_all) {Enforces periodic\\  boundary condition};
            \draw[->] (Input) -- ++ (0,-1)  -| node[pos=0.25] {} (phi); 
            \draw[->] (Input) -- ++ (0,-1)  -| node[pos=0.25] {} (A); 
            \draw[->] (Input) -- ++ (0,-1)  -| node[pos=0.25, below] {$t$} (B); 
            \draw[->, dotted] (Input) -- ++ (-1,0)  -| node[above] {$x_2$} (per.155); 
            \draw[->, dash dot dot] (A) -- ++ (0,-1)  -| node[pos=0.25] {} (per.50); 
            \draw[->, dash dot dot] (B) -- ++ (0,-1)  -| node[pos=0.25] {} (per.50); 
            \draw[->, dash dot dot] (phi) -- ++ (0,-1)  -| node[pos=0.25] {} (per.50); 

            \node[block, below right =of phi, xshift=1cm] (lin){Dense layer};
            \draw[->] (Input) -- ++ (1,0)  -| node[pos=0.25, above] {$t$} (lin); 
            \draw[->, dashed] (Input.345) -- ++ (1,0)  -| node[pos=0.25, below] {$x_1$} (lin.120); 

            \node[block, below right =of per] (concat){Hidden layers};
    
            \draw[->, dash dot dot] (concat) -- ++ (0,-1) node[left] {...};
            \draw[->, dash dot dot] (per) -- ++ (0,-1)  -| node[pos=0.25] {} (concat); 
            \draw[->, dash dot dot] (lin) -- ++ (0,-1)  -| node[pos=0.25] {} (concat); 
    \end{tikzpicture}
    }
    \caption{Extended realization of a non sequential network to enforce the periodic boundary condition in the space variable $x_2$ according to \eqref{eq::per_base}.
    The periodic base layer is dependent on three parameters $\mathbf{B}, \mathbf{A}$ and $\phi$, which have an explicit dependency on the time $t$. These parameters and the input variable for which the output shall be periodic, $x_2$, serve then as input to the periodic base layer.
    The other input variables $x_1$ and $t$ are bypassed around the additional elements through a dense layer.}
    \label{fig::model_periodic_bc}
\end{figure}
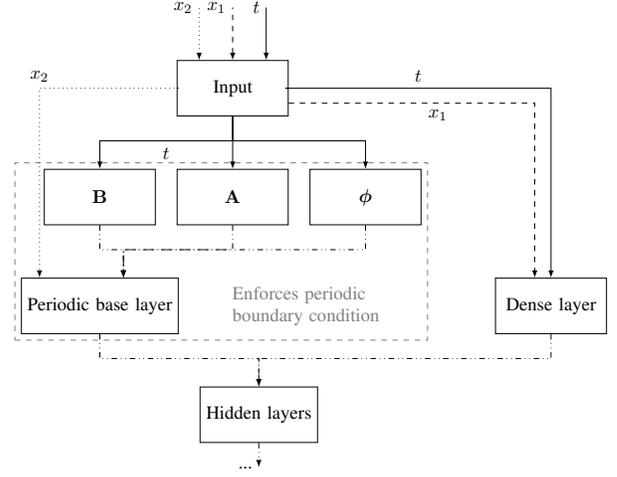

\subsection{Soft boundary constraints and other restrictions}

In case, the solution shall meet additional constraints which are not satisfied automatically by the NN, e.g. divergence-freeness, we train the NN by adding $L_\mathrm{space}$ and weighting it with the parameter $\rho$.
Details for the choice of $\rho$ and the implementation of $L_\mathrm{space}$ can be found in the corresponding examples in \Cref*{subsec::NSE}, \ref{subsec::KG} and \ref{subsec::heat}.

If soft boundary constraints are used, the boundary error needs to be taken into account according to \Cref{thm::errorlimit_iss}.

\section{Numerical results}  \label{sec::NumericalResults}

In this section, we investigate four examples and illustrate the applicability of the a posteriori error estimator for PDE-defined PINNs. The four examples are the heat equation, the transport equation, the Navier-Stokes equation, and the Klein-Gordon equation.

\vspace{0.2cm}
\noindent\fbox{%
    \parbox{0.48\textwidth}{%
        \small The code and data used to generate the subsequent results are accessible via doi: 10.5281/zenodo.7156168 under MIT Common License.
    }%
}
\vspace{0.2cm}

Herein, we apply \Cref{lem::errorlimit_strongly} and illustrate the different contributions by splitting the predicted error in a part reflecting the propagation of the error in the initial condition $\Einit$ and the integrated error from the misrepresentation of the temporal evolution $\Eeq$ according to 
\begin{equation*}
    \begin{aligned}
        \Einit (t) \coloneqq &\; M \mathrm{e}^{\omega t} \newdelta_0, \\
        \Eeq (t) \coloneqq & \;\int_0^t M \mathrm{e}^{\omega (t-s)}  \newdelta(s )\ds .
    \end{aligned}
\end{equation*}
We omit the numerical approximation steps as derived in \Cref{lemma::errorlimit_trpz} for notation simplicity. Furthermore, if we apply \Cref{thm::errorlimit_iss}, we denote the error introduced by the boundary approximation as 
\begin{equation*}
    \Ebc (t) \coloneqq \;\gamma(\|\mathcal{C}\mathbf{R_b}\|_{L^\infty(0,t; \Omega)} ).
\end{equation*}
The reference error is denoted by 
\begin{equation*}
   \Eref (t) \coloneqq \; \sqrt{ \int_\Omega \|\hatu (t, \mathbf{x})- \uu (t, \mathbf{x})\|^2\, \mathrm{d}\mathbf{x}}, 
\end{equation*}
which is computed numerically with composite trapezoidal rule. To apply \Cref{lemma::errorlimit_trpz}, we estimate the number of required subintervals by requiring  
\begin{equation*}
    \Eint \le \alpha \Eeqexp   ,
\end{equation*}
wherein $\Eeqexp$ is an initial estimation of the error $\Eeq$. We compute this estimation as 
\begin{equation*}
    \Eeqexp(t) = M \exp ^ {\omega t} \newdelta_0 + M (\mexp^{\omega t} - 1)\frac{\overline{\newdelta} \cdot \| \Omega \| }{\omega} 
\end{equation*}
using the average residual during training 
\begin{equation*}
    \overline{\newdelta} = \frac{1}{N_\mathrm{eq}} \sum_{i=1}^{N_\mathrm{eq}} \| \partial_t \hatu (t_{\mathrm{eq},i}, \mathbf{x}_{\mathrm{eq},i}) - \mathcal{A}\hatu (t_{\mathrm{eq},i}, \mathbf{x}_{\mathrm{eq},i}) \|_{2, \R^n} .
\end{equation*}
The parameter $\alpha$ scales the numerical integration error relative to the expected error $\Eeqexp$ and is chosen constantly as $\alpha = 0.33$ in the following experiments.
Hence, we can compute the number of required subintervals  
\begin{equation*}
    N_\mathrm{SI} (t) = \left\lceil \sqrt{\frac{\mexp^{\omega t} K \, t^3}{12 \alpha \Eeqexp(t)}} \,\right\rceil .
\end{equation*}

We setup NNs based on TensorFlow \cite{Tensorflow15} using the hyperbolic tangent as the activation function. Problem specific parameters such as network size are mentioned in the relevant subsections. A set of common parameters and settings is listed here: We train using the L-BFGS optimizer \cite{LiuN89} with learning rate $0.1$ unless stated otherwise. The NNs rely on the hyperbolic tangent as activation function and the data-driven and physics-informed contributions are equally weighted with $\kappa = 1$ unless explicitly mentioned.

For testing the NN and our error estimator, we generate a grid of equally spaced data points in the complete spatial domain and evaluate the prediction of the NN and the error estimator at various points in time.

\subsection{Heat equation} \label{subsec::heat}

We consider the scalar BIVP
\begin{equation}\label{eq::heat}
    \begin{aligned}
        \partial_t u(t,x) &= \tfrac{1}{5} \partial_{xx} u(t,x),& \\
        u(0, x) &= \sin(2\pi x ) \qquad &\mathrm{for }&\; x\in \Omega, \\
        u(t, 0) &= 0 = u(t, 1) \qquad &\mathrm{for }&\; t \in \mathbb{T} .
    \end{aligned}    
\end{equation}
for the spatio-temporal domain $\timeInt \times \Omega = [0,0.5] \times (0,1)$ with analytical solution $u(t, x) = \sin(2 \pi x) \mexp^{-\frac{1}{5} (2\pi)^2 t}$; see~\cite{Fou78} .

\paragraph{Semigroup properties}
The linear operator under consideration is defined by
\begin{equation*}
    \begin{aligned}
        \mathcal{A}\colon D(\mathcal{A}) = H_0^2(\Omega) \to L^2(\Omega),\qquad 
        u \mapsto \tfrac{1}{5} \partial_{xx} u.
    \end{aligned}
\end{equation*}

The operator $\mathcal{A}$ generates a $0 $-contraction semigroup on $L^2(\Omega)$ \cite[Ch. 7.4.3, Thm. 5]{Eva10}, which is also exponentially stable on $L^2(\Omega)$ using results from \cite{SeiTW22}.
\begin{proposition} \label{prop::heat_exp}
    The operator $\mathcal{A} = \tfrac{1}{5}\partial_{xx}$ generates an exponentially stable semigroup on $L^2(\Omega)$ for $\Omega = (0,1)$ with decay parameter $\mu = \frac{1}{5}\pi^2$.
\end{proposition}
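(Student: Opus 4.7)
My plan is to establish the decay estimate $\|\mathcal{S}(t)\|_{L^2(\Omega)} \le \mexp^{-\pi^2 t/5}$ directly via an energy estimate, which is the most elementary route and avoids needing a black-box statement from \cite{SeiTW22}. The key ingredient is the Poincar\'e inequality on $\Omega = (0,1)$ with optimal constant $\pi^{-2}$ for functions vanishing on $\partial\Omega$: for every $v \in H_0^1(\Omega)$ one has $\|v\|_{L^2}^2 \le \pi^{-2} \|v'\|_{L^2}^2$.

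First I would note that the generator property of $\mathcal{A}$ on $L^2(\Omega)$ is already recorded in the preceding paragraph, so the semigroup $\{\mathcal{S}(t)\}_{t\ge 0}$ exists and $D(\mathcal{A})$ is dense in $L^2(\Omega)$. For $u \in D(\mathcal{A})$, integration by parts with Dirichlet boundary values gives
\begin{equation*}
    \langle \mathcal{A} u, u \rangle_{L^2} = \tfrac{1}{5}\int_0^1 u''(x)\,u(x)\,\mathrm{d}x = -\tfrac{1}{5}\int_0^1 |u'(x)|^2\,\mathrm{d}x.
\end{equation*}
Applying the Poincar\'e inequality yields $\langle \mathcal{A}u, u\rangle_{L^2} \le -\tfrac{\pi^2}{5}\|u\|_{L^2}^2$, i.e.\ the operator $\mathcal{A} + \tfrac{\pi^2}{5}\mathcal{I}$ is dissipative on $L^2(\Omega)$.

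Next I would use the standard energy argument: for a classical solution $u(t) = \mathcal{S}(t)u_0$ with $u_0 \in D(\mathcal{A})$, differentiating gives
\begin{equation*}
    \tfrac{\mathrm{d}}{\mathrm{d}t}\tfrac{1}{2}\|u(t)\|_{L^2}^2 = \langle \mathcal{A}u(t), u(t)\rangle_{L^2} \le -\tfrac{\pi^2}{5}\|u(t)\|_{L^2}^2,
\end{equation*}
so Gr\"onwall's inequality produces $\|u(t)\|_{L^2} \le \mexp^{-\pi^2 t/5}\|u_0\|_{L^2}$. By density of $D(\mathcal{A})$ in $L^2(\Omega)$ and uniform boundedness of the contraction semigroup, the estimate extends to every $u_0 \in L^2(\Omega)$, giving $\|\mathcal{S}(t)\|_{L^2(\Omega)} \le \mexp^{-\mu t}$ with $\mu = \tfrac{\pi^2}{5}$, as claimed.

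I do not expect a genuine obstacle here; the only subtlety is making sure the boundary terms in the integration by parts vanish (which follows from $D(\mathcal{A}) \subseteq H_0^1(\Omega)$ under the Dirichlet interpretation of $H_0^2(\Omega)$ used in the paper) and justifying the differentiation of $t \mapsto \|\mathcal{S}(t)u_0\|_{L^2}^2$, which is standard for classical solutions in $D(\mathcal{A})$. As an alternative verification one may diagonalize $\mathcal{A}$ in the $L^2$-orthonormal basis $\phi_n(x) = \sqrt{2}\sin(n\pi x)$ of Dirichlet eigenfunctions with eigenvalues $-n^2\pi^2/5$; then $\mathcal{S}(t)u = \sum_{n\ge 1}\mexp^{-n^2\pi^2 t/5}\langle u,\phi_n\rangle \phi_n$ and Parseval immediately gives the same bound with the sharp rate $\mu = \pi^2/5$ attained on the first eigenmode, confirming that the constant is optimal.
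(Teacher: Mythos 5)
Your proof is correct, and it takes a genuinely different route from the paper's. The paper proves \Cref{prop::heat_exp} in one line, by citing \cite[Thm.~11.2.1]{SeiTW22}: in that framework the heat equation is recast as a first-order block system in $(u,\phi)$ with material-law operators $M_0=1$ and $M_1=k^{-1}$ (here $k=\tfrac15$), and the theorem returns a decay rate $\rho_0 = c_1/\bigl(\|M_1\|^2\,\|M_0\|\,\|C^{-1}\|^2\bigr) = k\pi^2$ once one computes $c_1=k^{-1}$ and $\|C^{-1}\|=\pi^{-1}$ for the inverse gradient on $(0,1)$ --- so the sharp Poincar\'e constant enters that route as well, only hidden inside $\|C^{-1}\|$. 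Your argument replaces this black box with an elementary, self-contained chain: integration by parts plus the optimal Poincar\'e inequality show that $\mathcal{A}+\tfrac{\pi^2}{5}\mathcal{I}$ is dissipative, Gr\"onwall gives the decay for $u_0\in D(\mathcal{A})$, and density extends it to all of $L^2(\Omega)$; your spectral cross-check in the basis $\phi_n(x)=\sqrt{2}\sin(n\pi x)$ even shows $\|\mathcal{S}(t)\|_{L^2(\Omega)} = \mexp^{-\pi^2 t/5}$ exactly, i.e.\ that $\mu=\tfrac{\pi^2}{5}$ is sharp, which the citation-based proof delivers only as an upper bound on the rate (though it is consistent with the principal Dirichlet eigenvalue). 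What the paper's approach buys in exchange is generality and uniformity with its abstract setting: the evolutionary-equations theorem applies to problems lacking a tractable eigenbasis or self-adjoint structure, whereas your energy argument leans on the one-dimensional self-adjoint Dirichlet Laplacian. You also handle the two genuine technical points correctly: reading $D(\mathcal{A})=H_0^2(\Omega)$ in the Dirichlet sense $H^2(\Omega)\cap H_0^1(\Omega)$ kills the boundary terms in the integration by parts, and differentiating $t\mapsto\|\mathcal{S}(t)u_0\|_{L^2}^2$ is legitimate for $u_0\in D(\mathcal{A})$ since $t\mapsto \mathcal{S}(t)u_0$ is then a classical solution.
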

\begin{proof}
    This is an application of \cite[Thm.~11.2.1]{SeiTW22} to the present problem.
\end{proof}

\paragraph{Setup}
For this problem, we use a NN with 4 hidden layers of 10 neurons each and train it for 3000 epochs. The PDE is enforced by 1000 collocation points in the domain $\timeInt \times \Omega$ and the initial condition is trained with 200 data points.

We perform two experiments, one with hard boundary constraints employing \Cref{thm::errorlimit} and one with soft boundary constraints employing \Cref{thm::errorlimit_iss}. In the former case, we design the network with a one-dimensional mask to enforce Dirichlet boundary condition parameterized by $w_1=1$ and $x_1=\tfrac{1}{2}$ (cf. \Cref{sec::methodology}). In the latter case, we train the network to match the boundary condition by using the loss contribution
\begin{equation*}
    L_\mathrm{space} = \frac{1}{N_\mathrm{space}} \sum_{i=1}^{N_\mathrm{space}} \left( | u(t_i, x=0) |^2 + | u(t_i, x=1)| ^2 \right)
\end{equation*}
with fixed weighting $\rho = 10$.

\paragraph{Numerical results for hard boundary constraints}

The trained network agrees well with the known reference solution, i.e., the deviation of the initial values can be limited from above by $\newdelta_0 = 2.9 \cdot 10^{-3}$ and the average residual over all collocation points is given by $\overline{\newdelta} = 2.3 \cdot 10^{-2}$. As shown in \Cref{fig::heat_train_diff_error}, knowing that the semigroup is exponentially decaying improves the error significantly. It outperforms the estimator based on the contraction growth bound by a factor of $2.25$ at the end of the considered time interval. 

\begin{figure}[bt]
    \begin{tikzpicture}
        \pgfplotsset{every tick label/.append style={font=\scriptsize}}
        \pgfplotsset{every axis/.append style={font=\scriptsize}}
        \begin{axis}[ymode=log, xmin=0, xmax=0.5, ymin=0.00001, xlabel={Time $t$}, ylabel=Error, height=5cm, width=9.2cm,grid=both,
            legend cell align=left,
            legend pos= south east,
            legend columns=4,
            legend style={draw=none, nodes={font=\scriptsize}}]
        \addplot[no marks, ErrorPred, dashed, line width=\lineWidth] table [x=t, y=E_init, col sep=comma] {Figures/heat_equation/exp_run_tanh_test_data_t_error_over_time.csv}; \addlegendentry{$\Einit^\mathrm{exp}$};
        \addplot[no marks, ErrorPred, dotted, line width=\lineWidth] table [x=t, y=E_PI, col sep=comma] {Figures/heat_equation/exp_run_tanh_test_data_t_error_over_time.csv}; \addlegendentry{$\Eeq^\mathrm{exp}$};
        \addplot[no marks, ErrorPred, line width=\lineWidth] table [x=t, y=E_tot, col sep=comma] {Figures/heat_equation/exp_run_tanh_test_data_t_error_over_time.csv}; \addlegendentry{$\Einit^\mathrm{exp} + \Eeq^\mathrm{exp}$};
        \addlegendimage{empty legend}
        \addlegendentry{}
        \addplot[no marks, ErrorRef2, dashed, line width=\lineWidth] table [x=t, y=E_init, col sep=comma] {Figures/heat_equation/con_run_tanh_test_data_t_error_over_time.csv};\addlegendentry{$\Einit^\mathrm{con}$};
        \addplot[no marks, ErrorRef2, dotted, line width=\lineWidth] table [x=t, y=E_PI, col sep=comma] {Figures/heat_equation/con_run_tanh_test_data_t_error_over_time.csv};\addlegendentry{$\Eeq^\mathrm{con}$};
        \addplot[no marks, ErrorRef2, line width=\lineWidth] table [x=t, y=E_tot, col sep=comma] {Figures/heat_equation/con_run_tanh_test_data_t_error_over_time.csv};\addlegendentry{$\Einit^\mathrm{con} + \Eeq^\mathrm{con}$};
        \addplot[no marks, ErrorRef, line width=\lineWidth] table [x=t, y=E_ref, col sep=comma] {Figures/heat_equation/con_run_tanh_test_data_t_error_over_time.csv};\addlegendentry{$\Eref$};
        \end{axis}
    \end{tikzpicture}
    \caption{A posteriori error estimator for the heat equation \eqref{eq::heat} on $\timeInt=[0,0.5]$ with hard boundary constraints. The a posteriori error estimator $E_\mathrm{PI} + \Einit$ is depicted with the reference error $\Eref$. The error estimator for the semigroup of the heat equation as exponentially decaying semigroup is shown in dark blue denoted by superscript $\mathrm{exp}$, while the results for the contractive semigroup are shown in red marked with the superscript $\mathrm{con}$. \label{fig::heat_train_diff_error}}
\end{figure}

\paragraph{Numerical results for soft boundary constraints}
Instead of training the heat equation with hard boundary constraints, we now use the extended theory with ISS as formulated in \Cref{thm::errorlimit_iss} to demonstrate that the error bound including ISS statements is valid for the heat equation. 
The error prediction is shown in \Cref{fig::heat_error_soft_bcs}. It is visible that the rise in the true error $\Eref$ towards the end of the time interval $[0,0.5]$ is related to an increased error in the boundary condition. This is properly reflected by the contribution $\Ebc$, which is computed using ISS parameters as analyzed in \cite{JacNPS18}. 

\begin{figure}
    \centering
    \begin{tikzpicture}
        \pgfplotsset{every tick label/.append style={font=\scriptsize}}
        \pgfplotsset{every axis/.append style={font=\scriptsize}}
        \begin{axis}[ymode=log, xmin=0, xmax=0.5, ymin=0.000008, xlabel={Time $t$}, ylabel=Error, height=5cm, width=9.2cm,grid=both,
            legend cell align=left,
            legend pos= south east,
            legend columns=2, transpose legend,
            legend style={draw=none, nodes={font=\scriptsize}}]
        \addplot[no marks, ErrorPred, dashed, line width=\lineWidth] table [x=t, y=E_init, col sep=comma] {Figures/heat_equation_soft_bcs/run_tanh_test_data_t_error_over_time.csv}; \addlegendentry{$\Einit$};
        \addplot[no marks, ErrorPred, dotted, line width=\lineWidth] table [x=t, y=E_PI, col sep=comma] {Figures/heat_equation_soft_bcs/run_tanh_test_data_t_error_over_time.csv}; \addlegendentry{$\Eeq$};
        \addplot[no marks, ErrorPred, dash dot, line width=\lineWidth] table [x=t, y=E_bc, col sep=comma] {Figures/heat_equation_soft_bcs/run_tanh_test_data_t_error_over_time.csv}; \addlegendentry{$\Ebc$};
        \addplot[no marks, ErrorPred,  line width=\lineWidth] table [x=t, y=E_tot, col sep=comma] {Figures/heat_equation_soft_bcs/run_tanh_test_data_t_error_over_time.csv}; \addlegendentry{$\Einit + \Eeq + \Ebc$};
        \addplot[no marks, ErrorRef, line width=\lineWidth] table [x=t, y=E_ref, col sep=comma] {Figures/heat_equation_soft_bcs/run_tanh_test_data_t_error_over_time.csv};\addlegendentry{$\Eref$};
        \end{axis}
    \end{tikzpicture}
    \caption{Heat equation with soft boundary constraints: Prediction error estimation including additional error introduced by the boundary error via ISS properties.}
    \label{fig::heat_error_soft_bcs}
\end{figure}
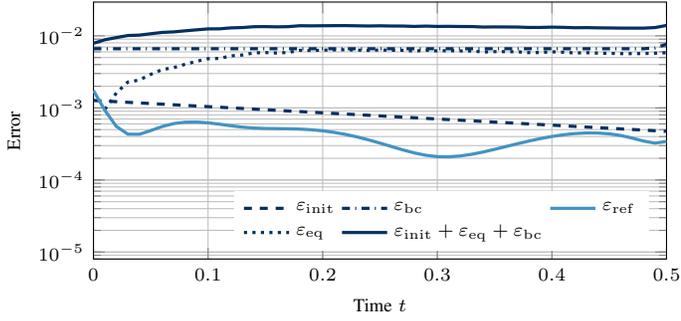

\subsection{Homogeneous advection equation}
We consider the homogeneous advection equation in two space dimensions with one output value

\begin{equation*}
    \begin{aligned}
        \partial_t u(t,\mathbf{x}) =\;& (\tfrac{1}{5}, \tfrac{1}{2}) \cdot \nabla u(t,\mathbf{x})\quad (t,\mathbf{x}) \in \timeInt \times \Omega, \\
        u(0,\mathbf{x}) =\; & \begin{cases}
        \tfrac{1}{2} - \| \mathbf{x} \|_1, \quad\quad\;\; &\mathrm{for }\; \| \mathbf{x} \|_1 \le \tfrac{1}{2}, \\
        0, \quad\quad &\mathrm{else}, 
        \end{cases}  \\
        u(t, -2, \xtwo ) = \;& u(t, 2, \xtwo) \qquad\quad\;\, t\in\timeInt,\; \xtwo \in [-2,2] ,\\
        u(t, \xone, -2) = \;& u(t, \xone, 2) \qquad\quad\;\,  t\in\timeInt,\; \xone \in [-2,2] ,\\
    \end{aligned}
\end{equation*}
for $\Omega \coloneqq (-2,2)^2$ and ${\timeInt} \coloneqq [0,8]$. 

\paragraph{Semigroup properties}
To account for the periodic boundary constraints, we introduce the space
\begin{equation*}
       X \coloneqq \left\lbrace f \in L^2_\mathrm{loc}(\R^2)\; \left|\,
        \begin{aligned} 
        &f(\mathbf{x}) = f(\mathbf{x} +\textstyle  \sum_{i=1,2} 4 k_i\mathbf{e_i}) \\ &\mathrm{for }\; k_1, k_2 \in \Z , \text{a.e.}\ \mathbf{x} \in \R^2
        \end{aligned} \right.\right\rbrace
\end{equation*}
with norm $\|\cdot \|_{L^2{[-2,2]^2}}$, where $\mathbf{e_i}$ denotes the $i$th cartesian unit vectors in $\R^2$.
The operator 
\begin{equation*}
        \mathcal{A}\colon D(\mathcal{A}) \to L^2(\Omega), \qquad 
        u \mapsto (\tfrac{1}{5}, \tfrac{1}{2}) \cdot \nabla u.
\end{equation*}
is bounded on $D(\mathcal{A}) =H^1(\Omega)\cap X $ and generates a strongly continuous semigroup on X. The solution to the advection equation is defined by the shift-semigroup 
\begin{equation*}
    \mathcal{S}_\mathrm{shift, \mathbf{C}}(s) f(\mathbf{x}) = f(\mathbf{x} + s\mathbf{C})
\end{equation*}
with $\mathbf{C}=(\tfrac{1}{5}, \tfrac{1}{2})^T$, which generates an isometry on $D(\mathcal{A})\subseteq L^2_\mathrm{loc}(\R^2)$ \cite[Ch. I, 4.15]{EngN00}. In particular, $\mathcal{S}_\mathrm{shift, \mathbf{C}}$ is a $0$-contraction semigroup.

\paragraph{Setup}
We set up a NN with 8 hidden layers of 40 neurons each and train it for 10000 epochs with $\kappa=0.3$. For the physics-informed contribution 10000 collocation points in the domain $\tilde{\timeInt} \times \Omega$, with $\tilde{\timeInt} \coloneqq [0,4]$, are used. The initial condition is learned on training points located on a grid of $201 \times 201$ for $t=0$. The periodic boundary condition are enforced according to \Cref*{sec::methodology}.

\paragraph{Numerical results}

In \Cref{fig::results_multidim_transport_error}, it is clearly visible that the prediction of the error lies in the same magnitude as the actual error and is mostly dominated by the error on the initial condition for the trained time range $\tilde{\timeInt} =[0,4]$. Even for an extended time domain ${\timeInt} = [0,8]$, for which the network has not been trained, the error estimator is a true upper bound on the prediction error. Since the estimator does not depend on NNs as a method or the quality of the approximation, this is an expected result.

The true error is overestimated increasingly over time but remains at the same magnitude of $10^0$. The lower part of \Cref{fig::results_multidim_transport_error} illustrates that leaving the training domain increases both the error and the overestimation of the error. 

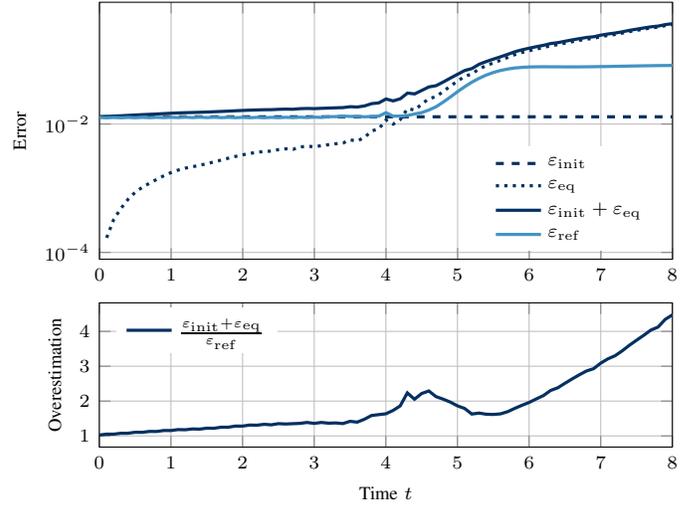
\begin{figure}
    \centering
    \begin{tikzpicture}
        \pgfplotsset{every tick label/.append style={font=\scriptsize}}
        \pgfplotsset{every axis/.append style={font=\scriptsize}}
        \begin{axis}[ymode=log, xmin=0, xmax=8, ylabel=Error, height=5cm, width=9.2cm,grid=both,
            legend cell align=left,
            legend pos= south east,
            legend style={draw=none},
            legend style={draw=none, nodes={font=\scriptsize}},
            name=axis1]
        \addplot[no marks, ErrorPred, dashed, line width=\lineWidth] table [x=t, y=E_init, col sep=comma] {Figures/transport_equation_2dim/run_tanh_input_eval__error_over_time.csv};
        \addplot[no marks, ErrorPred, dotted, line width=\lineWidth] table [x=t, y=E_PI, col sep=comma] {Figures/transport_equation_2dim/run_tanh_input_eval__error_over_time.csv};
        \addplot[no marks, ErrorPred,  line width=\lineWidth] table [x=t, y=E_tot, col sep=comma] {Figures/transport_equation_2dim/run_tanh_input_eval__error_over_time.csv};
        \addplot[no marks, ErrorRef, line width=\lineWidth] table [x=t, y=E_ref, col sep=comma] {Figures/transport_equation_2dim/run_tanh_input_eval__error_over_time.csv};
        \legend{$\Einit$,$\Eeq$ ,$\Einit+ \Eeq$,$\Eref$}
        
        \end{axis}
        \begin{axis}[xmin=0, xmax=8,xlabel={Time $t$}, ylabel={Overestimation}, height=3.5cm, width=9.2cm, grid=both, yshift=-2.5cm,
            legend cell align=left,
            legend pos= north west,
            legend style={draw=none},
            legend style={draw=none, nodes={font=\scriptsize}}, name=axis2]
        \addplot[no marks, ErrorPred, line width=\lineWidth] table [x=t, y=E_rel, col sep=comma] {Figures/transport_equation_2dim/run_tanh_input_eval__error_over_time.csv};
        \legend{$\frac{\Einit + \Eeq}{\Eref}$}
        \end{axis}
    \end{tikzpicture}
    \caption{Top: Error estimation for the two dimensional transport equation for ${\timeInt}=[0,8]$. Bottom: The error estimator overestimates the true error only by an acceptable factor even outside the training range $\tilde{\timeInt} =[0,4]$. }
    \label{fig::results_multidim_transport_error}
\end{figure}

\subsection{Navier-Stokes equations} \label{subsec::NSE}

The Navier-Stokes equations (NSEs) are one prominent example of numerical challenges and their solution via PINNs has been investigated previously in \cite{JinCLK21, PutPSZ22, WanTP21}. We consider velocity-pressure formulation of the NSEs
\begin{equation*}
    \begin{aligned}
        \dt \mathbf{u} + \varrho (\mathbf{u} \cdot \nabla) \mathbf{u} &= - \nabla p + \frac{1}{Re} \nabla^2 \mathbf{u} & \mathrm{in }\; \timeInt &\times \Omega,\\
        \nabla \cdot \mathbf{u} &= 0  &\mathrm{in }\; \timeInt &\times \Omega,\\
    \end{aligned} \label{eq::nses}
\end{equation*}
to address a problem for which the analytical solution is known, the Taylor flow \cite{KimM85}. For the Taylor flow, we consider a two dimensional domain $\Omega \coloneqq (0,\pi)^2 $ with associated velocities $\mathbf{u} = (u,v)^T$, pressure $p$,  $Re = 1$ and $\varrho=1$ on the time domain $\timeInt \coloneqq [0,1]$. The analytical solution is then given by \cite{KimM85}
\begin{equation*}
    \begin{aligned}
        u(t, \xone, \xtwo) =& - \mathrm{cos}(\xone) \sin (\xtwo) \mexp^{-2t}, \\
        v(t, \xone, \xtwo) =& \sin(\xone) \cos (\xtwo) \mexp^{-2t}, \\
        p(t, \xone, \xtwo) =& - \frac{1}{4}( \mathrm{cos}(2 \xone) + \cos (2 \xtwo) )\mexp^{-4t} .\\
    \end{aligned}
\end{equation*}

\paragraph{Semigroup properties}
The semigroup properties for the Navier-Stokes equations can be derived by applying the Helmholtz-Leray projection \cite{ChoM93} on \eqref{eq::nses} and investigating the resulting abstract Cauchy problem. The Helmholtz-Leray projection $\mathcal{P}$ is the projection from $L^2(\Omega) $ into the space of divergence-free vector fields $L_\sigma^2(\Omega ) \coloneqq \mathcal{P} L^2(\Omega)$. Using the Stokes operator 
\begin{equation*}
    \mathcal{A} = \mathcal{P} \Delta \quad, \; D(\mathcal{A}) = H^2(\Omega )^n \cap H^{1}_0(\Omega )^n \cap L_\sigma ^2(\Omega ) ,
\end{equation*}
the projected NSEs are
\begin{equation*}
    \begin{aligned}
        \dot{\uu}(t)  = \mathcal{A} \uu (t) - \mathcal{P} (\uu\cdot \nabla)\uu ,
    \end{aligned}
\end{equation*}
wherein the rightmost term is considered an inhomogeneity. The Stokes operator generates a bounded analytic semigroup and hence a strongly continuous semigroup in $L^2(\Omega)$ with $\Omega \subset \mathbb{R}^d$ (here, $d=2$) bounded with a smooth boundary and Dirichlet boundary condition \cite{Gig81}. In consequence of \cite[Thm.~7.7]{Paz89} and \cite{Gig81}, the operator $\mathcal{P}\Delta$ generates a 0-contraction semigroup on $L^2_\sigma(\Omega)$.

\paragraph{Setup}
We train a PINN with the loss function \eqref{eq::loss}, $\kappa = 0.7$, $\rho = 100$ and  
\begin{equation*}
    L_\mathrm{space} = \sum_{i=1}^{N_\mathrm{eq}} \| \nabla \cdot \mathbf{u} \|^2 .
\end{equation*}
Since the PINN does not automatically satisfy the restriction that the target function must lie in the space of divergence-free functions, we hereby achieve that the violations made by our NN approximation are negligible in comparison to the other approximation errors. Due to this modification in the strategy, our estimator is no longer guaranteed to be a rigorous upper bound.
We use hard boundary conditions as discussed in \cref{subsec::hard_bc} with centers $\xonezero = \xtwozero = \pi/2$ and widths $\wone = \wtwo = \pi$. The contribution $\boldsymbol{\chi}(t, \mathbf{x})$ contains the target boundary condition. Its components are given by
\begin{equation*}
    \begin{aligned} 
        \chi_{1,2}(t, \mathbf{x}) = & \left(-\tfrac{x_{1,2}^2}{\pi^2}+1 \right)\left(\mp \sin(x_{2,1})\mexp^{-2t} \right) \\
        & + \left(-\tfrac{(x_{1,2}-\pi)^2}{\pi^2}+1 \right)\left(\pm \sin(x_{2,1}) \mexp^{-2t} \right).
    \end{aligned} 
\end{equation*}
Here, we want to emphasize that the pressure $p$ (or, to be more precise, its spatial derivative) and the non-linear terms are considered only as an additional contribution to the residual (c.f.~\Cref{rem::inhomogeneous}). Especially the pressure contribution to the residual is justified since it is not governed by a distinct evolution equation. For the sake of completeness, it is included in the initial condition, but the pressure is not restricted by any boundary condition. 

\begin{remark}
    By using methodology as proposed in \cite{KasM20}, one could enforce divergence-free fields in NNs by construction. Since this method relies on Fourier transformation and, therefore, would introduce an extensive bias to a sin/cos-based solution as desired, we decide to approximate the divergence-freeness as a soft constraint.  
\end{remark}

We train a PINN with 10 hidden layers with 80 neurons in each layer over 20000 epochs with 2000 collocation points. The initial condition is trained on a grid of $31 \times 31$ points over the spatial domain.

\paragraph{Numerical results}

The trained network agrees well with the initial condition deviating by $\newdelta_0 = 1.2 \cdot 10^{-3}$, the temporal evolution is also reflected well such that $\overline{\newdelta}= 5 \cdot 10^{-3}$. As visible in \Cref{fig::nse_error}, the actual error decays slightly over time, which is not reflected by the error estimator. Here, the error estimator overestimates the actual error by approximately two magnitudes.

\begin{figure}
    \centering
    \begin{tikzpicture}
        \pgfplotsset{every tick label/.append style={font=\scriptsize}}
        \pgfplotsset{every axis/.append style={font=\scriptsize}}
        \begin{axis}[xmin=0, xmax=1,xlabel={Time $t$}, ymode=log, ylabel=Error, height=5cm, width=9.2cm, grid=both,
            legend cell align=left,
            legend columns = 2,
            legend style={at={(0.95,0.7)}, draw=none, nodes={font=\scriptsize}},
            anchor=west]
        \addplot[no marks, ErrorPred, dashed, line width=\lineWidth] table [x=t, y=E_init, col sep=comma] {Figures/nse/run_tanh_taylor_t_error_over_time.csv};
        \addlegendentry{$\Einit$};
        \addplot[no marks, ErrorPred, dotted, line width=\lineWidth] table [x=t, y=E_PI, col sep=comma] {Figures/nse/run_tanh_taylor_t_error_over_time.csv};
        \addlegendentry{$\Eeq$};
        \addplot[no marks, ErrorPred, dash dot, line width=\lineWidth] table [x=t, y=E_tot, col sep=comma] {Figures/nse/run_tanh_taylor_t_error_over_time.csv};
        \addlegendentry{$\Einit+\Eeq$};
        \addplot[no marks, ErrorRef, line width=\lineWidth] table [x=t, y=E_ref, col sep=comma] {Figures/nse/run_tanh_taylor_t_error_over_time.csv};
        \addlegendentry{$\Eref$};
        \end{axis}
    \end{tikzpicture}
    \caption{Predicted error for the PINN simulating the Taylor flow goverened by the NSE. For comparison, the reference error is computed by numerical integration of the difference between the analytical solution and the PINN prediction.}
    \label{fig::nse_error}
\end{figure}
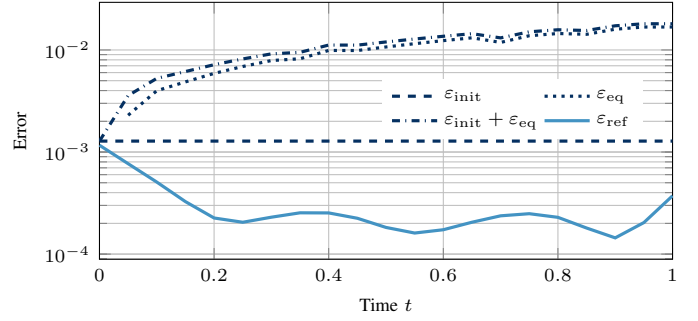

\subsection{Klein-Gordon equation with homogeneous Neumann boundary} \label{subsec::KG}

As a final example, we consider the Klein-Gordon equation with Neumann boundary condition, which is relevant to problems in quantum field theory \cite{Pol01}. The one-dimensional problem can be formulated as 
\begin{equation}
	\label{eq::kg}
    \partial_{tt} u = \partial_{xx} u - \frac{1}{4} u  
\end{equation}
in the domain $\timeInt \times \Omega = [0, 0.2] \times (0,1)$. The initial values are given as 
\begin{equation*}
    \begin{aligned}
        u(0,x) =& \cos(2 \pi x) ,\\
        \partial_t u(0,x) =& \tfrac{1}{2} \cos( 4 \pi x ) ,
    \end{aligned}
\end{equation*}
and we equip~\eqref{eq::kg} with homogeneous Neumann boundary conditions, i.e.
\begin{equation*}
    \partial_x u (t, x\in\{0,1\}) =\;0 .
\end{equation*}
The analytical solution is  
\begin{equation*}
    u(t, x) = \cos(2 \pi x) \cos \left(a_2 t\right) + 
    \tfrac{1}{2 \cdot a_4} \, \cos(4 \pi x) \sin\left( a_4 t\right) 
\end{equation*}
with $a_n = \sqrt{(n \pi)^2 +\tfrac{1}{4}}$; cf.~\cite{Pol01}.

\paragraph{Semigroup properties}
The BIVP for the Klein-Gordon equation can be written as a first order system of the form
\begin{equation*}
        \partial_t \begin{pmatrix}
            u \\ u_t 
        \end{pmatrix}
        = \begin{pmatrix}
            0 & \mathcal{I} \\ \Delta_{xx} - \tfrac{1}{4} & 0 
        \end{pmatrix}
        \begin{pmatrix}
            u \\ u_t 
        \end{pmatrix} .
\end{equation*}

According to \cite{KeyW05} the Neumann Laplacian generates a C0-cosine operator function on $L^2(\Omega)$ and applying results \cite[Lem.~4.48, Rem.~4.49]{Mug14}, the operator 
\begin{equation*}
    \tilde{\mathcal{A}} =  \begin{pmatrix}
        0 & \mathcal{I} \\
        \mathcal{A} + \mathcal{B} & 0
    \end{pmatrix} 
\end{equation*}
for a bounded linear operator $\mathcal{B} \colon D(\mathcal{A}) \rightarrow L^2(\Omega)$ generates a strongly continuous semigroup.

Since $\mathcal{B} = -\tfrac{1}{4} \mathcal{I}$ with $\mathcal{I}$ denoting the identity operator, $\mathcal{B}$ suffices the conditions for \cite[Remark 4.49]{Mug14}. Hence, the operator $\tilde{\mathcal{A}}$ generates a strongly continuous semigroup on $X = L^2(\Omega) \times L^2(\Omega)$ for the domain $D(\tilde{\mathcal{A}}) = H^2(\Omega) \times L^2(\Omega)$. 

We use the results \cite[Rem.~4.49]{Mug14} to derive the necessary parameters for error estimation. According to these results, the resolvent set of the new operator $\rho(\tilde{\mathcal{A}})$ is given by all $\lambda \in \C$ whenever $\lambda^2 \in \rho(\mathcal{A}+ \mathcal{B})$. This then transfers to the spectrum, such that $\lambda \in \sigma(\tilde{\mathcal{A}})$ whenever $\lambda ^2 \in \sigma(\mathcal{A}+ \mathcal{B})$.

The spectrum of the Neumann Laplacian consists of the point spectrum only and is given by 
\begin{equation*}
    \sigma(\Delta_{[0,1]}^N ) = \left\lbrace -k \pi \mid k\in \N_0 \right\rbrace
\end{equation*}
so that the spectrum of $\tilde{\mathcal{A}}$ can be derived to be 
\begin{equation*}
    \sigma(\tilde{\mathcal{A}}) = \left\lbrace \sqrt{-k \pi - 0.25} \mid k\in \N_0 \right\rbrace.
\end{equation*}
Hence, the largest real part of the eigenvalues is given by $\lambda_\mathrm{max} = 0$ for $ k=0$, which we use to apply \Cref{lem::errorlimit_strongly} utilizing the spectral mapping theorem \cite[Thm. 2.4]{Paz89}.

If we consider the Klein-Gordon-equation on the more restrictive space $X = H^2(\Omega) \times L^2(\Omega)$ with norm
\begin{equation}\label{eq::newnorm_kg}
    \| (u, v) \| = \left( \| \nabla u\|^2_{L^2(\Omega)} + 0.25 \|u\|^2_{L^2(\Omega)} + \| v\|^2_{L^2(\Omega)}\right)^{1/2},
\end{equation}
then one can show that the generated semigroup is a 0-contraction semigroup analogous to the proof of \cite[Ch. 7.4, Thm. 5]{Eva10} with minor modifications. 

\paragraph{Numerical determination of M}
This Klein-Gordon example on $X = L^2(\Omega) \times L^2(\Omega)$ shows nicely the technical difficulties arising when the parameter $M$ is required for a complete understanding of the growth behaviour of the weak solution. 
To retrieve a reasonable estimate for $M$, we use the inequality
\begin{equation*} 
    M \ge \frac{\|\mathcal{S}(t) \mathbf{v}\|}{\mexp^{\omega t} \|\mathbf{v}\| }.
\end{equation*} 
Here, we use input data $\mathbf{v}$, which is Gaussian noise with means in the range $[0,20] \times [0,20]$ and variances in the range $[0.05, 0.5] \times [0.05, 0.5]$, and compute $\|\mathcal{S}(t) \mathbf{v}\|$ with the python framework FiPy \cite{GuyWW09}. Thereby we approximate $M \ge 164.43$.


\paragraph{Setup}
We use a NN with 6 hidden layers with 10 neurons each. The initial data is given by 201 equally spaced data points in the range $[0,1]$ and 2500 collocation points are selected equally distributed from $\timeInt \times [0,1]$. The Neumann boundary condition is implemented as soft boundary constraint via 
\begin{equation*}
    L_\mathrm{space} = \frac{1}{N_\mathrm{eq}} \sum_{i=1}^{N_\mathrm{eq}} \left( | u_x(t_{\mathrm{eq},i}, 0) |^2 + | u_x(t_{\mathrm{eq},i}, 1) |^2 \right)
\end{equation*}
with $\rho = 50$. The data-driven loss is more heavily considered by choosing $\kappa=2.5$.
We use multiple optimization algorithms to train this NN properly. Firstly, we run 30000 epochs of the adam optimizer with learning rate $10^{-3}$ to find a suitable starting point for further 50000 epochs of the second order L-BFGS algorithm with learning rate $10^{-2}$.

\paragraph{Numerical results}
The error estimator for the Klein-Gordon equation~\eqref{eq::kg} on $X = L^2(\Omega)\times L^2(\Omega)$ overestimates the true error from the beginning significantly by two orders of magnitude (c.f.~\Cref{fig::error_kleingordon}, top). This improves slightly over time to only one order of magnitude, but is most clearly due to the high value of $M$. The fact that $M \neq 1$ is reflected nicely in \Cref{fig::error_kleingordon} (top), in which the error contribution $\Einit$ is significantly larger than the true error $\Eref$  even for $t=0$.  

When evaluating the NN on $X= H^2(\Omega)\times L^2(\Omega)$ with the norm \eqref{eq::newnorm_kg} we are considering a 0-contraction semigroup, which is properly reflected in \Cref{fig::error_kleingordon} (bottom). In consequence, the error estimator lies in the same order of magnitude as the true error and overestimates the error by less than $10\%$.

\begin{figure}
    \centering
    \begin{tikzpicture}
        \pgfplotsset{every tick label/.append style={font=\scriptsize}}
        \pgfplotsset{every axis/.append style={font=\scriptsize}}
        \begin{axis}[ymode=log, xmin=0, xmax=0.2, ymin=0.006, ylabel=Error, height=5cm, width=9.2cm,grid=both,
            legend cell align=left,
            xtick = {0, 0.05, 0.1, 0.15, 0.2},
            legend pos= south east,
            legend columns=2, transpose legend,
            legend style={draw=none, nodes={font=\scriptsize}}]
        \addplot[no marks, ErrorPred, dashed, line width=\lineWidth] table [x=t, y=E_init, col sep=comma] {Figures/klein_gordon/run_tanh_kleingordon_ref_t_error_over_time.csv}; \addlegendentry{$\Einit$};
        \addplot[no marks, ErrorPred, dotted, line width=\lineWidth] table [x=t, y=E_PI, col sep=comma] {Figures/klein_gordon/run_tanh_kleingordon_ref_t_error_over_time.csv}; \addlegendentry{$\Eeq$};
        \addplot[no marks, ErrorPred,  line width=\lineWidth] table [x=t, y=E_tot, col sep=comma] {Figures/klein_gordon/run_tanh_kleingordon_ref_t_error_over_time.csv}; \addlegendentry{$\Einit + \Eeq$};
        \addplot[no marks, ErrorRef, line width=\lineWidth] table [x=t, y=E_ref, col sep=comma] {Figures/klein_gordon/run_tanh_kleingordon_ref_t_error_over_time.csv};\addlegendentry{$\Eref$};
        \end{axis}
    \end{tikzpicture}

    \begin{tikzpicture}
        \pgfplotsset{every tick label/.append style={font=\scriptsize}}
        \pgfplotsset{every axis/.append style={font=\scriptsize}}
        \begin{axis}[ymode=log, xmin=0, xmax=0.2, ymin=0.006, xlabel={Time $t$}, ylabel=Error, height=5cm, width=9.2cm,grid=both,
            legend cell align=left,
            xtick = {0, 0.05, 0.1, 0.15, 0.2},
            legend pos= south east,
            legend columns=2, transpose legend,
            legend style={draw=none, nodes={font=\scriptsize}}]
        \addplot[no marks, ErrorPred, dashed, line width=\lineWidth] table [x=t, y=E_init, col sep=comma] {Figures/klein_gordon_in_H2L2/run_tanh_kleingordon_ref_t_error_over_time.csv}; \addlegendentry{$\Einit$};
        \addplot[no marks, ErrorPred, dotted, line width=\lineWidth] table [x=t, y=E_PI, col sep=comma] {Figures/klein_gordon_in_H2L2/run_tanh_kleingordon_ref_t_error_over_time.csv}; \addlegendentry{$\Eeq$};
        \addplot[no marks, ErrorPred,  line width=\lineWidth] table [x=t, y=E_tot, col sep=comma] {Figures/klein_gordon_in_H2L2/run_tanh_kleingordon_ref_t_error_over_time.csv}; \addlegendentry{$\Einit + \Eeq$};
        \addplot[no marks, ErrorRef, line width=\lineWidth] table [x=t, y=E_ref, col sep=comma] {Figures/klein_gordon_in_H2L2/run_tanh_kleingordon_ref_t_error_over_time.csv};\addlegendentry{$\Eref$};
        \end{axis}
    \end{tikzpicture}
    \caption{\label{fig::error_kleingordon} Error predicted via the presented methods for the Klein-Gordon equation with homogeneous Neumann boundary condition on $X= L^2(\Omega)\times L^2(\Omega)$ (top) or $X= H^2(\Omega)\times L^2(\Omega)$ (bottom). }
\end{figure}
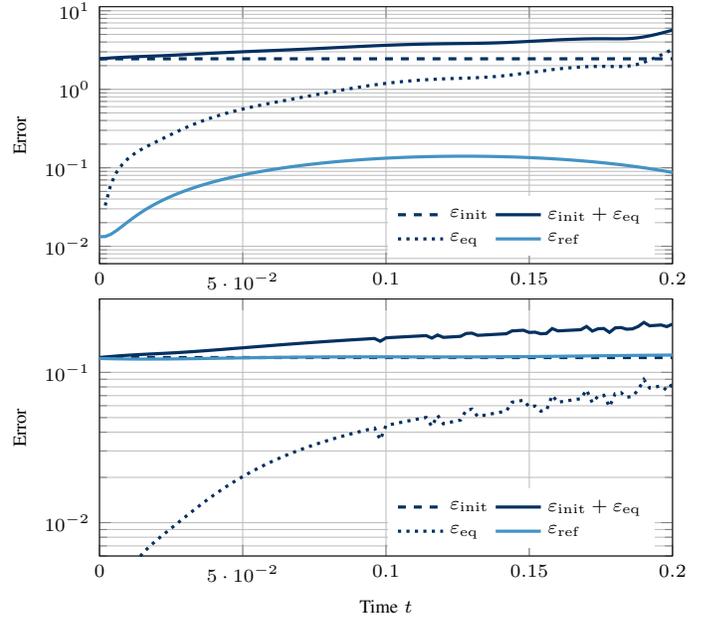

\section{Discussion} \label{sec::discussion}

We derived computable rigorous upper bounds for the prediction error of PINNs and other surrogate modeling techniques that approximate solutions to linear partial differential equations. We illustrated the applicability of the error estimator in several examples. We discuss in the following the most important of the thereby illustrated properties, possibilities, and shortcomings.

Firstly, we demonstrated with the NSEs that the error estimator can also be applied to nonlinear partial differential equations. However, this presumes that the dominant contribution to the time evolution is still linear and leads to a weakening of the guaranteed upper error bound to an approximate error estimation. In the most general case, the previously presented methodology is not directly applicable to nonlinear PDEs, which is subject to further investigation.

Apart from the question of how to thoroughly extend the presented theory to nonlinear PDEs, finding the correct governing semigroup parameters remains a challenge. Even though most real-world problems (such as the Navier-Stokes equations, Maxwell equations, etc.) probably have been discussed extensively as part of many mathematical research areas, the investigations of the parameters necessary for the a posteriori error estimator have been more academic in nature. More precisely, while the existence of these parameters has been proven in various constellations, their explicit calculation is either not directly included in the publications or hidden in lengthy proofs. We have presented for the Klein-Gordon equation how to determine numerically one of the key parameters. This approach could be extended to compute both the growth bound $\omega$ and the scaling factor $M$ by discretizing the problem using finite-element methods and finding the eigenvalues. This problem in general and the latter suggestion need further study.

We have shown for the heat equation that the two different estimators resulting once from the analysis that it is a 0-contractive semigroup and once from the exponential stability analysis, overestimate the true error in different magnitudes. This improvement of the estimator through a more thorough analysis of the semigroup is consistent with the fundamental reason why PINNs are more powerful than purely data-driven NNs: the introduction of a priori knowledge. 

Mostly, except for the heat equation and the Klein-Gordon equation, we used hard boundary constraints. This is not always possible, and when it is, it can be laborious to find a suitable representation. Hence, the extension of the theory using input-to-state stability (ISS) is highly relevant for the practical use of this estimator. Since ISS is restricted to exponentially stable semigroups, a similar analysis for, e.g., the Klein-Gordon equation remains open. 

Beyond the previously mentioned technical limitations and challenges, the methodology does not provide information about a lower limit to the true error, as it would be required for judging the sharpness of the error estimator \cite{Ver96}.

\bibliographystyle{myplain}
\bibliography{literature}

\begin{thebibliography}{10}

\bibitem{Tensorflow15}
M.~Abadi, A.~Agarwal, P.~Barham, and E.~B. et~al.
\newblock {TensorFlow}: Large-scale machine learning on heterogeneous systems,
  2015.
\newblock Software available from tensorflow.org.

\bibitem{PutPSZ22}
C.~Amalinadhi, P.~S. Palar, R.~Stevenson, and L.~Zuhal.
\newblock On physics-informed deep learning for solving navier-stokes
  equations.
\newblock In {\em Proc. AIAA SciTech}, page 1436, 2022.

\bibitem{ArnK21}
F.~Arnold and R.~King.
\newblock State--space modeling for control based on physics-informed neural
  networks.
\newblock {\em Eng. Appl. Artif. Intell.}, 101:104195, 2021.

\bibitem{Bar94}
A.~R. Barron.
\newblock Approximation and estimation bounds for artificial neural networks.
\newblock {\em Mach Learn}, 14(1):115--133, 1994.

\bibitem{BerCP22}
S.~Berrone, C.~Canuto, and M.~Pintore.
\newblock Solving pdes by variational physics-informed neural networks: an a
  posteriori error analysis.
\newblock {\em arXiv preprint}, arXiv:2205.00786, 2022.

\bibitem{ChoM93}
A.~J. Chorin and J.~E. Marsden.
\newblock {\em A mathematical introduction to fluid mechanics}, volume~3 of
  {\em Texts Appl. Math.}
\newblock Springer-Verlag, 1993.

\bibitem{CruTSB19}
M.~A. Cruz, R.~Thompson, L.~Sampaio, and R.~Bacchi.
\newblock The use of the reynolds force vector in a physics informed machine
  learning approach for predictive turbulence modeling.
\newblock {\em Comput. Fluids}, 192:104258, 2019.

\bibitem{DeRM22}
T.~De~Ryck and S.~Mishra.
\newblock Generic bounds on the approximation error for physics-informed (and)
  operator learning.
\newblock {\em arXiv preprint 2205.11393}, 2022.

\bibitem{Din96}
Z.~Ding.
\newblock A proof of the trace theorem of sobolev spaces on lipschitz domains.
\newblock {\em Proc. Amer. Math. Soc.}, 124(2):591--600, 1996.

\bibitem{DonN21}
S.~Dong and N.~Ni.
\newblock A method for representing periodic functions and enforcing exactly
  periodic boundary conditions with deep neural networks.
\newblock {\em J. Comput. Phys.}, 435:110242, jun 2021.

\bibitem{DuaYLLQY21}
C.~Duan, Y.~Jiao, Y.~Lai, X.~Lu, Q.~Quan, and J.~Z. Yang.
\newblock Analysis of deep ritz methods for laplace equations with dirichlet
  boundary conditions.
\newblock {\em arXiv preprint 2111.02009}, 2021.

\bibitem{EngN00}
K.-J. Engel, R.~Nagel, and S.~Brendle.
\newblock {\em One-parameter semigroups for linear evolution equations}, volume
  194.
\newblock Springer-Verlag, 2000.

\bibitem{Eva10}
L.~C. Evans.
\newblock {\em Partial differential equations}, volume~19.
\newblock Amer. Math. Soc., 2010.

\bibitem{Fou78}
J.~Fourier.
\newblock {\em Analytical Theory of Heat}.
\newblock Cambridge University Press, 1878.

\bibitem{Gig81}
Y.~Giga.
\newblock The stokes operator in $ l\_r $ spaces.
\newblock {\em Proc. Jpn. Acad. A: Math. Sci.}, 57(2):85--89, 1981.

\bibitem{GreDY19}
S.~Greydanus, M.~Dzamba, and J.~Yosinski.
\newblock Hamiltonian neural networks.
\newblock {\em Neural. Inf. Process. Syst.}, 32, 2019.

\bibitem{GuyWW09}
J.~E. Guyer, D.~Wheeler, and J.~A. Warren.
\newblock Fipy: Partial differential equations with python.
\newblock {\em Comput. Sci. Eng.}, 11(3):6--15, 2009.

\bibitem{HilU22}
B.~Hillebrecht and B.~Unger.
\newblock Certified machine learning: A posteriori error estimation for
  physics-informed neural networks.
\newblock In {\em 2022 Int. Joint Conf. on Neural Networks (IJCNN)}, pages
  1--8, 2022.

\bibitem{HonSX21}
Q.~Hong, J.~W. Siegel, and J.~Xu.
\newblock A priori analysis of stable neural network solutions to numerical
  pdes.
\newblock {\em arXiv preprint 2104.02903}, 2021.

\bibitem{HorSW90}
K.~Hornik, M.~Stinchcombe, and H.~White.
\newblock Universal approximation of an unknown mapping and its derivatives
  using multilayer feedforward networks.
\newblock {\em Neural Netw.}, 3(5):551--560, 1990.

\bibitem{JacNPS18}
B.~Jacob, R.~Nabiullin, J.~R. Partington, and F.~L. Schwenninger.
\newblock Infinite-dimensional input-to-state stability and orlicz spaces.
\newblock {\em {SIAM} J. Cont. Optim.}, 56(2):868--889, 2018.

\bibitem{JinLTK20}
P.~Jin, L.~Lu, Y.~Tang, and G.~E. Karniadakis.
\newblock Quantifying the generalization error in deep learning in terms of
  data distribution and neural network smoothness.
\newblock {\em Neural Netw.}, 130:85--99, 2020.

\bibitem{JinCLK21}
X.~Jin, S.~Cai, H.~Li, and G.~Karniadakis.
\newblock Nsfnets (navier-stokes flow nets): Physics-informed neural networks
  for the incompressible navier-stokes equations.
\newblock {\em J. Comput. Phys.}, 426:109951, 2021.

\bibitem{KasM20}
K.~Kashinath, P.~Marcus, et~al.
\newblock Enforcing physical constraints in cnns through differentiable pde
  layer.
\newblock In {\em ICLR Workshop DeepDiffEq}, 2020.

\bibitem{KeyW05}
V.~Keyantuo and M.~Warma.
\newblock The wave equation in lp-spaces.
\newblock In {\em Semigroup Forum}, volume~71, pages 73--92. Springer-Verlag,
  2005.

\bibitem{KimM85}
J.~Kim and P.~Moin.
\newblock Application of a fractional-step method to incompressible
  navier-stokes equations.
\newblock {\em J. Comput. Phys.}, 59(2):308--323, 1985.

\bibitem{LiuN89}
D.~Liu and J.~Nocedal.
\newblock On the limited memory {BFGS} method for large scale optimization.
\newblock {\em Math. Program.}, 45:503--528, 1989.

\bibitem{McF06}
K.~S. McFall.
\newblock {\em An artificial neural network method for solving boundary value
  problems with arbitrary irregular boundaries}.
\newblock PhD thesis, Georgia Institute of Technology, 2006.

\bibitem{McFM09}
K.~McFall and J.~Mahan.
\newblock Artificial neural network method for solution of boundary value
  problems with exact satisfaction of arbitrary boundary conditions.
\newblock {\em IEEE Trans. Neural Netw.}, 20(8):1221--1233, 2009.

\bibitem{MenSCGL22}
C.~Meng, S.~Seo, D.~Cao, S.~Griesemer, and Y.~Liu.
\newblock When physics meets machine learning: A survey of physics-informed
  machine learning, 2022.

\bibitem{MinR21}
P.~Minakowski and T.~Richter.
\newblock Error estimates for neural network solutions of partial differential
  equations.
\newblock {\em arXiv preprint 2107.11035}, 2021.

\bibitem{Mug14}
D.~Mugnolo.
\newblock {\em Semigroup methods for evolution equations on networks},
  volume~20.
\newblock Springer-Verlag, 2014.

\bibitem{NicKFU22}
J.~Nicodemus, J.~Kneifl, J.~Fehr, and B.~Unger.
\newblock Physics-informed neural networks-based model predictive control for
  multi-link manipulators.
\newblock {\em IFAC-PapersOnLine}, 55(20):331--336, 2022.

\bibitem{Paz89}
A.~Pazy.
\newblock {\em Semigroups of Linear Operators and Applications to Partial
  Differential Equations}, volume~44.
\newblock Springer-Verlag, 1989.

\bibitem{Pol01}
A.~D. Polyanin.
\newblock {\em Handbook of linear partial differential equations for engineers
  and scientists}.
\newblock Chapman and Hall, 2001.

\bibitem{RaiPK19}
M.~Raissi, P.~Perdikaris, and G.~Karniadakis.
\newblock Physics-informed neural networks: A deep learning framework for
  solving forward and inverse problems involving nonlinear partial differential
  equations.
\newblock {\em J. Comput. Phys.}, 378:686--707, 2019.

\bibitem{Sch20}
F.~L. Schwenninger.
\newblock Input-to-state stability for parabolic boundary control: linear and
  semilinear systems.
\newblock In {\em Control theory of infinite-dimensional systems}, pages
  83--116. Springer-Verlag, 2020.

\bibitem{SeiTW22}
C.~Seifert, S.~Trostorff, and M.~Waurick.
\newblock {\em Exponential Stability of Evolutionary Equations}.
\newblock Springer-Verlag, 2022.

\bibitem{ShiZK20}
Y.~Shin, Z.~Zhang, and G.~E. Karniadakis.
\newblock Error estimates of residual minimization using neural networks for
  linear pdes.
\newblock {\em arXiv preprint}, arXiv: 2010.08019, 2020.

\bibitem{Son89}
E.~D. Sontag et~al.
\newblock Smooth stabilization implies coprime factorization.
\newblock {\em {IEEE} Trans. Automat. Control}, 34(4):435--443, 1989.

\bibitem{TucW09}
M.~Tucsnak and G.~Weiss.
\newblock {\em Observation and control for operator semigroups}.
\newblock Springer-Verlag, 2009.

\bibitem{Ver96}
R.~Verf{\"u}rth.
\newblock A review of a posteriori error estimation and adaptive
  mesh-refinement techniques.
\newblock {\em Wiley-Teubner Ser. Adv. Numer. Math.}, 1996.

\bibitem{WanTP21}
S.~Wang, Y.~Teng, and P.~Perdikaris.
\newblock Understanding and mitigating gradient flow pathologies in
  physics-informed neural networks.
\newblock {\em {SIAM} J. Sci. Comput.}, 43(5):A3055--A3081, 2021.

\bibitem{WuXP18}
J.-L. Wu, H.~Xiao, and E.~Paterson.
\newblock Physics-informed machine learning approach for augmenting turbulence
  models: A comprehensive framework.
\newblock {\em Phys. Rev. Fluids}, 3:074602, Jul 2018.

\bibitem{XiaWWP17}
H.~Xiao, J.-L. Wu, J.-X. Wang, and E.~Paterson.
\newblock Physics-informed machine learning for predictive turbulence modeling:
  Progress and perspectives.
\newblock {\em Proc. AIAA SciTech}, 2017.

\bibitem{YuLMK22}
J.~Yu, L.~Lu, X.~Meng, and G.~E. Karniadakis.
\newblock Gradient-enhanced physics-informed neural networks for forward and
  inverse pde problems.
\newblock {\em Comput. Meth. Appl. Mech. Eng.}, 393:114823, 2022.

\end{thebibliography}

\vfill

\end{document}